\documentclass[letterpaper]{article} 
\usepackage{aaai24}  
\usepackage{times}  
\usepackage{helvet}  
\usepackage{courier}  
\usepackage[hyphens]{url}  
\usepackage{graphicx} 

\urlstyle{rm} 
\usepackage{natbib}  
\usepackage{caption} 
\frenchspacing  
\setlength{\pdfpagewidth}{8.5in} 
\setlength{\pdfpageheight}{11in} 
%
\usepackage{algorithm}
\usepackage{algorithmic}
\usepackage{echodefs}
\usepackage{booktabs} 

\usepackage{xcolor}
%
\usepackage{newfloat}
\usepackage{listings}
\DeclareCaptionStyle{ruled}{labelfont=normalfont,labelsep=colon,strut=off} 
\lstset{%
	basicstyle={\footnotesize\ttfamily},
	numbers=left,numberstyle=\footnotesize,xleftmargin=2em,
	aboveskip=0pt,belowskip=0pt,%
	showstringspaces=false,tabsize=2,breaklines=true}
\floatstyle{ruled}
\newfloat{listing}{tb}{lst}{}
\floatname{listing}{Listing}
%
\pdfinfo{
/TemplateVersion (2024.1)
}
\usepackage{amsmath,amsfonts,amssymb}
\usepackage{amsthm}
\newtheorem{theorem}{Theorem}[section]
\newtheorem{lemma}[theorem]{Lemma}

\setcounter{secnumdepth}{2} 

%


\title{A Graph Dynamics Prior for Relational Inference}
\author{
    Liming Pan\textsuperscript{\rm 1,2}\equalcontrib,
    Cheng Shi\textsuperscript{\rm 3}\equalcontrib,
    Ivan Dokmani\'c\textsuperscript{\rm 3}\thanks{To whom correspondence should be addressed.}
}
\affiliations{
    \textsuperscript{\rm 1}School of Cyber Science and Technology,
  University of Science and Technology of China,
  Hefei, China 230026, \\
  \textsuperscript{\rm 2}School of Computer and Electronic Information, Nanjing Normal University, China 210023, \\
\textsuperscript{\rm 3}  Departement Mathematik und Informatik,
 Universit\"at Basel,
  Basel, Switzerland 4051\\
panlm99@gmail.com,    
firstname.lastname@unibas.ch
}

\usepackage{bibentry}

\begin{document}
\maketitle

\begin{abstract}
Relational inference aims to identify interactions between parts of a dynamical system from the observed dynamics. Current state-of-the-art methods fit the dynamics with a graph neural network (GNN) on a learnable graph. They use one-step message-passing GNNs---intuitively the right choice since non-locality of multi-step or spectral GNNs may confuse direct and indirect interactions. But the \textit{effective} interaction graph depends on the sampling rate and it is rarely localized to direct neighbors, leading to poor local optima for the one-step model. In this work, we propose a \textit{graph dynamics prior} (GDP) for relational inference. GDP constructively uses error amplification in non-local polynomial filters to steer the solution to the ground-truth graph. To deal with non-uniqueness, GDP simultaneously fits a ``shallow'' one-step model and a polynomial multi-step model with shared graph topology. Experiments show that GDP reconstructs graphs far more accurately than earlier methods, with remarkable robustness to under-sampling. Since appropriate sampling rates for unknown dynamical systems are not known a priori, this robustness makes GDP suitable for real applications in scientific machine learning. Reproducible code is available at https://github.com/DaDaCheng/GDP.
\end{abstract}

\section{Introduction}

Understanding interactions is key to understanding the function of dynamical systems in physics~\cite{arenas2008synchronization}, biology,  neuroscience~\cite{izhikevich2007dynamical}, epidemiology~\cite{pastor2015epidemic}, and sociology~\cite{castellano2009statistical}, to name a few. It is often time-consuming or even impossible to determine this structure experimentally: for example, neuronal connectivity is determined by painstaking analyses of electron microscopy images. On the other hand, there has been an explosion of availability of signal measurements. It is thus attractive to devise methods which determine interactions from the observed dynamics alone.

The seminal work on neural relational inference (NRI) showed that in some cases graph neural networks (GNNs) can perform well on this challenge~\cite{kipf2018neural}. GNN-based methods use a graph generator and a dynamics learner\footnote{Even though they are sometimes called differently, these two components are always present.}: the graph generator produces a candidate graph while the dynamics learner tries to match the dynamics to data, acting as a surrogate for the original system. Since the dynamics of a node only depends on its neighbors, it is intuitive to try and emulate it with a single-step message-passing GNN. Multi-step message passing or spectral GNNs may confuse direct and indirect neighbors. Indeed, single-step architectures appear in the original NRI work and its various adaptations~\cite{alet2019neural,lowe2022amortized,graber2020dynamic,
ha2023learning,zhu2022neural,
zhang2022universal,wang2022iterative}.

An implicit assumption in a single-step scheme is that the sampling rate (the number of samples per unit time) is sufficiently high. With a low sampling rate the effective interaction graph is non-local which causes a single-step surrogate to confuse direct and indirect interactions. Single-step message passing also limits the expressivity of neural surrogates. The vanilla GCN~\cite{kipf2017semi} and many other GNNs implicitly assume homophily and act as low-pass graph filters~\cite{zhu2020beyond}. Although they can handle certain heterophilic data~\cite{ma2022homophily}, single-layer GNNs can only implement a limited range of graph filters. Unknown nonlinear dynamics call for graph filters adaptive to data such as in ChebyNet~\cite{defferrard2016convolutional} or GPR-GNN~\cite{chien2021adaptive}.

\begin{figure*}[t]
    \centering
    \includegraphics[width=0.95\linewidth]{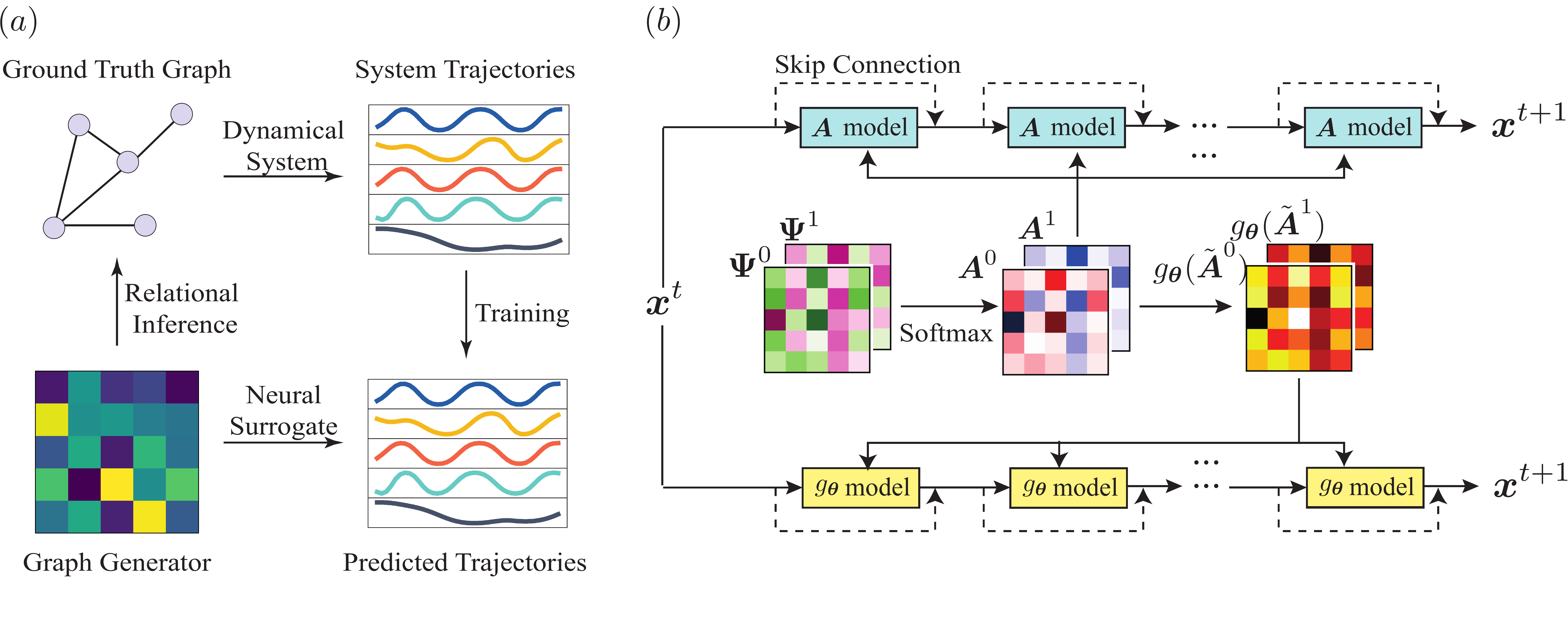}
    \caption{(a) An illustration of the relational inference problem. (b) The architecture of the full model. We train two dynamics surrogates with shared topology. The detailed architecture of $\mA$ model and $g_{\vtheta}$ model can be found in Section~\ref{sec:model}.}
    \label{fig:illustrate}
\end{figure*}
In this work, we propose a \textit{graph dynamics prior} (GDP) for relational inference. The ``prior'' terminology is an analogy with the \textit{deep image prior} used in inverse problems in image processing~\cite{ulyanov2018deep}, where the inductive bias of model (a convolutional neural network) steers reconstruction towards images with good properties, which is an implicit prior. Our model simultaneously uses a high-degree non-local polynomial and a ``shallow'' adjacency matrix to approximate the effective interactions between consecutive state samples. The polynomial filter is sensitive to graph perturbations which helps avoid poor local minima. As there are, in general, multiple graph matrices that can result in the same polynomial filters, simultaneously fitting a parallel single-step model resolves the direct interactions without converging to poor local minima thanks to the gradients from the polynomial model; this resolves issues which traditionally prevented the use of polynomial filters in NRI. Experiments show that GDP achieves significantly higher inference accuracy than any of the earlier approaches. Notably, it finds the direct interactions even at very low sampling rates where earlier approaches severely degrade.

\subsection{Related Work}

\noindent \textbf{Relational Inference}. Classical approaches to relational inference (RI) measure correlations~\cite{peng2009partial}, mutual information~\cite{wu2020discovering}, transfer entropy~\cite{schreiber2000measuring} or causality~\cite{quinn2011estimating} of system trajectories. These approaches do not perform future state predictions. Other studies focus on RI with known dynamical models~\cite{wang2016data,pouget2015inferring}. These are often designed for particular dynamics.
When only system trajectories are observed, NRI~\cite{kipf2018neural} infers the interaction graph in an unsupervised way while simultaneously predicting the state evolution. The scope of NRI has been extended to dynamic graphs~\cite{graber2020dynamic}, graphs with heterogeneous interactions~\cite{ha2023learning}, and modular meta-learning problems~\cite{alet2019neural}. It has also found applications in learning protein interactions~\cite{zhu2022neural} and was adapted to make causal claims~\cite{lowe2022amortized}.
The model has been extended to the non-amortized setting, where the graph encoder is often removed~\cite{lowe2022amortized,zhang2022universal}.
The dynamics learner part of these models is a GNN with one-step message passing.

\noindent \textbf{Spectral Graph Neural Networks}.  
Two basic design paradigms for GNNs are via graph (spatial) and spectral graph convolutions. ``Spatial'' graph convolution aggregates neighborhood information; spectral filters are generically non-local \cite{ortega2018graph}. 
ChebyNet~\cite{defferrard2016convolutional} parameterizes the convolution kernel by Chebyshev polynomials of the diagonal matrix of Laplacian eigenvalues. Two other GNNs that use polynomial graph filters are APPNP~\cite{gasteiger2018predict} and GPR-GNN~\cite{chien2021adaptive}. By making the weights of matrix polynomials trainable, GPR-GNN adaptively implements low-pass or high-pass graph filters.
For relational inference, spatial GNNs more straightforwardly preserve locality via one-step message-passing and thus have been widely considered in different methods.

\subsection{Our Contributions}

Contrary to prior belief, we show that RI by non-localized filters can work much better than shallow alternatives. While earlier work avoids direct--indirect confusion via one-step message-passing, it suffers from local minima even in the presence of weak indirect interactions which prevents it from fitting the dynamics. We conjecture and empirically demonstrate that a non-local model resolves this issue, provided that it is properly designed. Concretely, to mitigate the ambiguities arising from rooting matrix polynomials which prevented earlier uses of multi-step architectures, we run in parallel a local, single-step model with a shared adjacency matrix, but now benefiting from the ``steering'' by the multi-step model. This effectively results in a multi (two)-scale architecture. In addition to yielding better graphs, this strategy yields a much better model for the dynamics and greatly reduces the number of samples required to learn the graph. The two-scale architecture brings a remarkable performance improvement across the board.

\addtolength{\tabcolsep}{2pt}   
\begin{table*}[t]

\centering
\fontsize{9pt}{10pt}\selectfont 
\begin{tabular}{@{}lcccccccc@{}}
\toprule
Model & Graph & $\delta t$ & Volume &  MI & TE & NRI & GDP\\ 
\midrule
\textbf{Michaelis} &ER-50  & 1& $50\times 10$ &77.84&53.66&54.09±2.22& \textbf{98.31±1.41}\\
\textbf{Menten} &  &4  &$50\times 10$&55.93&51.95&51.85±0.97& \textbf{88.66±8.73} \\
 &BA-50  & 1& $50\times 10$ &88.04&63.96&55.20±1.81& \textbf{93.02±3.94}\\
 &  &4  &$50\times 10$&50.18&60.30&52.26±1.23& \textbf{87.42±6.08}\\
\midrule
\textbf{Rössler} &ER-50  & 1& $50\times 10$ &50.65&54.17 &60.35±4.58& \textbf{99.89±0.23}\\
\textbf{Oscillators} &  &4  &$50\times 10$&52.28&54.13& 51.95±1.28& \textbf{56.82±3.56}\\
 &BA-50  & 1& $50\times 10$ &56.28&62.64& 59.81±5.74& \textbf{90.55±16.13}\\
 &  &4  &$50\times 10$&50.46&52.63 & 52.42±1.76& \textbf{59.22±5.24}\\
\midrule
\textbf{Diffusion} & ER-50 & $1$ & $20\times 10$  &56.00&57.63&70.66±7.80 & \textbf{93.44±4.87}\\
 &  & $4$ & $20\times 10$    & 71.28&68.99& 60.23±6.73& \textbf{93.39±4.87} \\
 & BA-50 & $1$ & $20\times 10$  & 72.06&61.71& 72.03±11.62& \textbf{94.41±3.23}\\
 &  & $4$ & $20\times 10$  &86.38&69.77&59.01±5.73& \textbf{90.55±5.14} \\
\midrule
\textbf{Spring} & ER-50 & $20$ & $15\times 10$ &  72.24&76.05    & 99.84±0.47 & \textbf{99.99±0.02} \\
 &  & $60$ & $15\times 10$  & 71.43&69.17   &  97.47±2.93 & \textbf{98.96±1.25}\\
 & BA-50 & $20$ & $15\times 10$  & 91.16&84.67    & 98.17±5.40 & \textbf{99.88±0.36}\\
 &  & $40$ & $15 \times 10$ &\textbf{92.82}&63.67    & 67.89±9.89& 83.77±9.14 \\
\midrule
\textbf{Kuramoto} & ER-50 & $1$ & $30\times 30$ &64.69&64.76& 82.09±19.14 & \textbf{94.93±12.94}\\
 &  & $4$ & $30\times 30$   & 75.34&63.53&  95.96±5.01& \textbf{99.30±1.67}\\
 & BA-50 & $1$ & $20\times 30$   &55.46&61.87& 69.70±18.16& \textbf{90.13±12.38}\\
 &  & $4$ & $20\times 30$  &51.13&64.62& 89.57±11.69& \textbf{97.48±2.78}\\
 \midrule
\textbf{FJ} & ER-50 & $1$ & $20\times 10$  &53.66&83.64& 97.67±1.06 & \textbf{99.82±0.47}\\
 &  & $4$ & $20\times 10$   &58.98&59.00& 65.25±11.98& \textbf{75.48±10.60}\\
 & BA-50 & $1$ & $20\times 10$   &52.32&86.88&  91.62±4.67& \textbf{92.63±13.46}\\
 &  & $4$ & $20\times 10$    &50.90&67.13&67.27±9.59 & \textbf{73.89±9.84}\\
\midrule
\textbf{CMN} & ER-50 & $1$ & $20\times 10$   & 87.39&64.35& 89.76±2.59 & \textbf{97.58±3.38}\\
 &  & $4$ & $20\times 10$    &93.13&74.08& 89.94±1.42 & \textbf{98.40±1.92}\\
 & BA-50 & $1$ & $20\times 10$   & 87.84&71.51& 83.35±2.30 & \textbf{88.83±6.19} \\
 &  & $4$ & $20\times 10$   &92.28&75.39& 83.05±2.35& \textbf{92.97±5.26} \\ 
 \midrule
\textbf{Netsim} & -- & $1$ & $5\times 200$   & 94.73&74.83& 71.57±1.57& \textbf{95.09±0.68} \\
 & & $2$ & $5\times 100$   &94.10&50.20&65.90±6.24& \textbf{94.70±0.16} \\
\bottomrule
\end{tabular}
\caption{Interaction graph inference accuracy measured by AUC and for various dynamical systems and inferring models. The results for NRI and GDP are averaged over $10$ independent runs.  ER-$n$ or BA-$n$ denotes the name number of nodes ($n$) of the graph and $\delta t$ denotes the sampling interval.  The sampling interval refers to second-time samplings in pre-generated trajectories, not samplings during solving the ODEs. In the VOLUME column, $a\times b$ corresponds to \#trajectories $\times$ \#sampled steps. Boldface marks the highest accuracy.\label{tab:res}}
\end{table*}
\addtolength{\tabcolsep}{-2pt}
\section{Preliminaries}

We consider a graph $G = (V,E)$ on $|V| = n$ vertices, which describes the interaction relations among components of a dynamical system. Let $\mA$ be the adjacency matrix, $\tilde{\mA}=\mD^{-1/2} \mA \mD^{-1/2}$ its symmetric normalized version, and $\tilde{\mL} = \mI - \mD^{-1/2} \mA \mD^{-1/2}$ the symmetric normalized Laplacian. We also use $\mM$ to denote a general graph matrix, either the adjacency matrix or the Laplacian. 

\subsection{Graph Dynamical Systems and Relational Inference}

Node $i$ at time $t$ is described by a state vector $\vx_i^t \in \R^{d_s}$. We write $\vx^t=(\vx_1^t,\vx_2^t,\cdots,\vx_n^t)$ for the state of all nodes at time $t$. We consider both continuous- and discrete-time graph dynamical systems with synchronized updates,
\begin{equation*}
    \begin{array}{cc}
        \textbf{Continuous} & \textbf{Discrete} \\
        \dot{\vx}_i^t = f_i(\vx_i^t, (\vx_k^t)_{k\in N_i} ) &  \vx^{t+1}_i = f_i(\vx_i^t, (\vx_k^t)_{k\in N_i}),
    \end{array}
\end{equation*}
where $N_i$ is the set of vertex $i$'s neighbours and the dot denotes the time derivative. In relational inference, we observe snapshots of a graph dynamical system $\{\vx^t, t\in T\}$ at some set $T$ of observation times and aim to find the interaction graph from these snapshots, without any knowledge about the form of $f_i$. Figure~1 (a) illustrates the common framework for neural network-based RI approach. While the ground truth graph and the dynamics (top row) are both unknown, a graph generator and a dynamics surrogate are trained simultaneously (bottom row). The unknown graph is then predicted to be the best graph that explains the snapshots $\{\vx^t,t\in T\}$.

\subsection{Polynomial Graph Filters}

A degree-$K$ polynomial graph filter with coefficients $\vtheta = (\theta_0, \ldots, \theta_K)$ is defined as
$
g_{\vtheta} (\mM) = \sum_{k=0}^{K} \theta_k {\mM}^k.
$
Assuming $\mM$ is symmetric,\footnote{This assumption is made for simplicity of explanation; our proposed method easily handles directed graphs.} we let $\mM = \mU \mLambda {\mU}^\top$ be its eigenvalue decomposition. Then $g_{\vtheta} (\mM) = \mU g_{\vtheta}(\mLambda) {\mU}^\top$, where $g_{\vtheta}(\mLambda)$ applies element-wise to the diagonal elements of $\mLambda$ and $g_{\vtheta}(\lambda) = \sum_{k=0}^K \theta_k \lambda^k$. The scalar polynomial $g_{\vtheta}(\lambda)$ is called the convolution kernel. By the Weierstrass approximation theorem, any continuous function on a bounded interval can be approximated by a polynomial with arbitrary precision. 

\section{Interaction Retrieval }\label{sec:effective}

\subsection{Effective Interaction Graph}

Unless the sampling rate is very high, the effective graph modeled by the neural surrogate contains both direct and indirect interactions. Let us illustrate this. Consider a linear dynamics with scalar states. Let $\vx^t = [x_1^t, \ldots, x_n^t]^T$ be the vector of node states at time $t$ and  
$
    \dot{\vx}^t = \beta \mM \vx^t.
$
Solving the linear ODE, the states at $t$ and $t+\delta t$ are related as
$
    \vx ^{t+\delta t}= \exp(\beta \mM \delta t) \vx^t,
$
where $\exp(\beta \mM \delta t)$ is the matrix exponential which encodes the effective interaction graph. Since
\[
\exp(\beta \mM \delta t) = \mI + \beta \mM \delta t + \frac{\beta^2}{2!} \mM^2 {\delta t}^2+ \cdots,
\]
the interactions in principle exist instantaneously for all path-reachable nodes. For small $\delta t$, the power series can be approximated by truncating at first order in $\delta t$ and the interaction graph is effectively encoded by $\mM$, but for moderate or large $\delta t$, we need to include the higher-order terms.

\begin{figure}[t]
\centering
\includegraphics[width=0.9\linewidth]{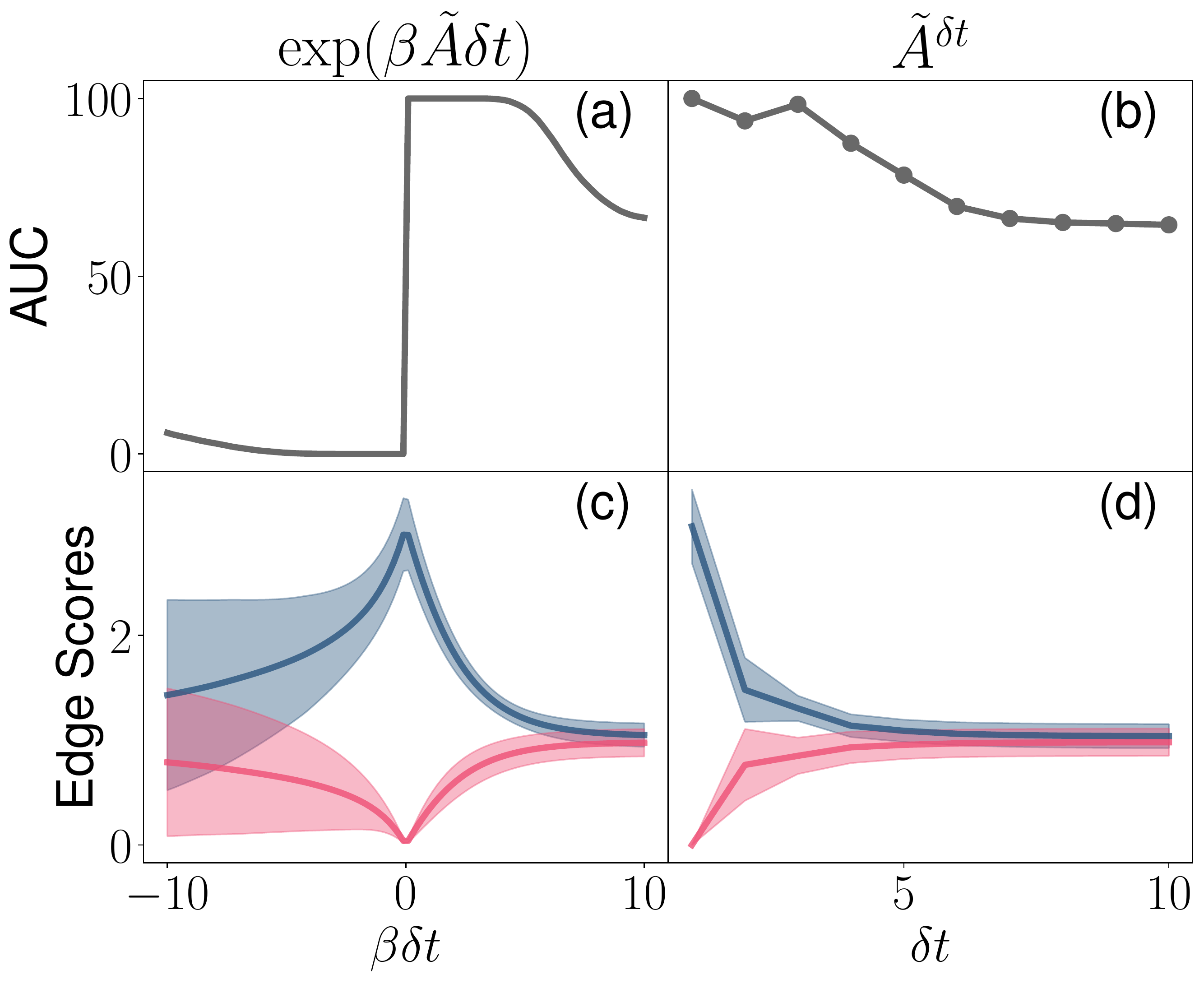}
\caption{Effects of observation intervals on the effective interaction graph. The AUC for (a) continuous and (b) discrete dynamics. The absolute value of normalized average score for (c) continuous and (d) discrete dynamics. The shaded regions show the standard deviation of each edge class. The results are obtained on an Erdös--Rényi graph with $n=30$ and $p=0.3$.}
\label{fig:poly}
\end{figure}

For general nonlinear dynamics, let $\bar{N}_i$ be the set of nodes that $x^{t+\delta t}_i$ \emph{effectively} depends on. Let $q_i$ be the effective transition function,
$
    x^{t+\delta t}_i = q_i((x_k^t)_{k\in \bar{N}_i}).
$
If $q_i$ is continuous, Kolmogorov--Arnold  theorem lets us write~\cite{khesin2014arnold,zaheer2017deep}
$
x_i^{t+\delta t} = \rho_i ( \sum_{j\in \bar{N}_i} J_{i,j} \phi(x_j^t) ),
$
for some continuous $\phi$ and $\rho_i$. The function $\phi$ is independent of $q_i$, and the parameters $\mJ = (J_{i,j})$ can be interpreted as interaction strengths. To build a neural surrogate for the effective system, we can approximate $\phi$ and $\rho_i$ by neural networks and identify the interaction strengths via training; the NRI decoder can be interpreted in this sense. NRI further distinguishes a node and its neighbours as $
x_i^{t+\delta t} = x_i^{t} + \rho (x_i, \sum_{j\in N_i} A_{i,j} \phi(x_i^{t}, x_j^t) )  
$, where $\rho$ and $\phi$ are neural networks and $\mA = (A_{i,j})$ trainable parameters.

While neural networks may approximate effective dynamics, the inferred matrix $\mA$ is (at best) close to the \textit{effective} interaction graph $\mJ$, rather than the true adjacency. In a linear system the effective graph is generated by a polynomial of the transition matrix which motivates the following \textit{polynomial} neural surrogate:
\[
x_i^{t+\delta t} = x_i^{t} + \rho \bigg(x_i, \sum_{j\in N_i} g_{\vtheta}(\mA)_{ij} \phi(x_i^{t}, x_j^t) \bigg).  
\]
We assume the dynamics to be invariant to the neighbour permutations, and therefore let $\rho$ be node-independent~\cite{zaheer2017deep}.
Further, although the above form of the Kolmogorov--Arnold theorem considers scalar node states, we use it as a heuristic to motivate the functional form of the effective interaction graph in GDP for both scalar and vector states. We will experimentally show that this surrogate benefits RI in both linear and nonlinear cases in Section 5. In the following, we discuss (\romannumeral1) when the effective interaction graph can confuse direct and indirect interactions, and (\romannumeral2) when and how a polynomial neural surrogate can help to set things straight.

\subsection{Effect of Observation Intervals}

To what extent the effective interaction graph reflects the direct interactions is determined by the intrinsic properties of dynamics and by the sampling interval $\delta t$. In this section, we empirically study the effect of the sampling rate in two cases where the effective graph can be determined exactly: continuous-time linear dynamics $\dot{\vx}^t = \beta \tilde{\mA} \vx^t$ where the effective interaction graph is $\mJ = \exp(\beta \tilde{\mA} \delta t)$, and a discrete-time linear system with synchronized updates $\vx^{t+1}= \tilde{\mA} \vx^t$ where the effective graph is $\tilde{\mA}^{\delta t}$. We compare the true graph defined by $\mA$ with the effective graph $\mJ$ by plotting the AUC as a function of $\delta t$ in Figure~\ref{fig:poly}. 


There is a qualitative distinction between the continuous and discrete systems, but in both performance deteriorates as $\delta t$ grows.
For continuous dynamics, the AUC remains close to 100\% for small $\beta \delta t$, meaning that the effective graph is dominated by direct interactions. When $\beta \delta t$ becomes larger, direct--indirect confusion deteriorates performance. In fact, even for small $\beta \delta t$ the average scores for positive and negative classes become closer. This signifies a loss of stability which makes it more likely to misclassify edges.

For discrete dynamics, odd hops result in a larger AUC, which leads to the perhaps counter-intuitive conclusion that larger observation intervals do not always yield worse RI. Intuitively, there is always a length-$3$ walk between two directly connected nodes $i,j$ as $i\to j\to i \to j$ , but we can find a length-$2$ walk between only when they share a common neighbour. The general trend is still that large observation intervals result in a direct--indirect confusion.

From the examples, we can classify sampled dynamics into (i) those with effective interaction graphs closely mirroring direct interactions and (ii) those where coarse sampling weakens this correlation. We use the terms ``strong'' and ``weak'' correlations informally, without a strict boundary between the two. In Section~\ref{sec:exp}, we demonstrate that many known dynamical models fit the first category. Finally, we emphasize that unlike our proposed two-scale polynomial architecture, simply stacking multiple message-passing layers does not resolve undersampling and might even worsen performance; we show this in Appendix C.1.

\subsection{Interaction Graph Retrieval and Noise Amplifier Effect of Graph Polynomials}

Even if we can find the \textit{effective} interaction graph (which is the best we can hope for without additional assumptions) and this graph is correlated with the true adjacency, the question is how to recover direct interactions. We discuss this issue in cases where the effective graph is either strongly or weakly correlated with the direct interactions. 

\begin{figure}[t]
\centering
\includegraphics[width=0.9\linewidth]{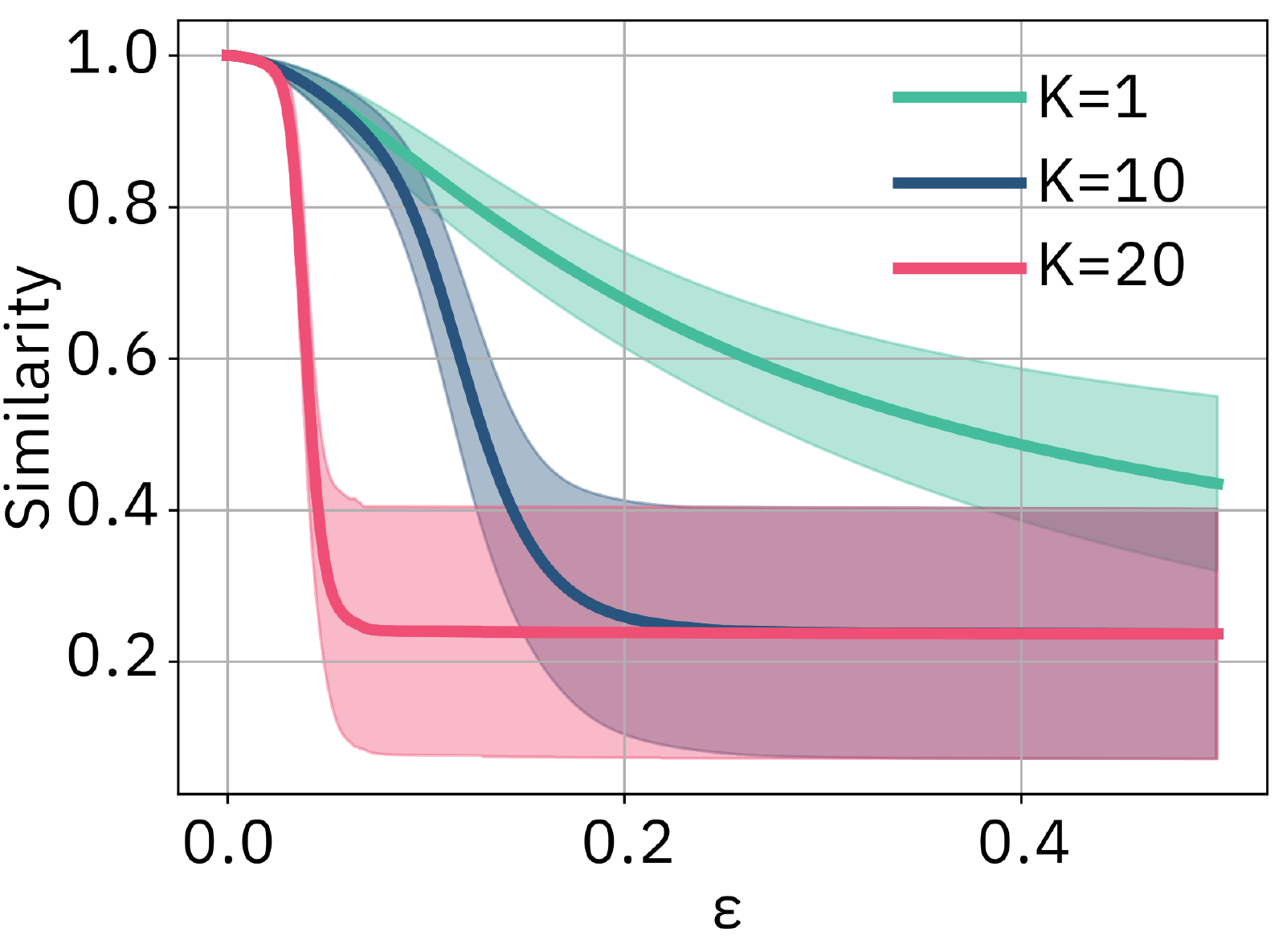}
\caption{The noise amplifier effect of graph polynomials for RI. The results are obtained on an Erdös--Rényi graph with $n=50$ and $p=0.1$. Other parameters are chosen to be $\vtheta = 1$ and $t=1\mathrm{e}{-5}$.}
\label{fig:stability}
\end{figure}

First, we consider a weakly correlated example. Consider a linear system in which the effective graph is a polynomial of $\mM$ with coefficients $\vtheta^\star$, $\mJ = g_{\vtheta^\star}(\mM)$. Suppose we find $\mJ$ and now want to find $\mM$; we thus need to solve $g_{\vtheta^{\prime}}(\mM^{\prime})\approx \mJ$ for $\mM^{\prime}$ and $\vtheta^{\prime}$ (note that $\mM^{\prime}$ does not necessarily equal $\mM$). But the solution of $g_{\vtheta^{\prime}}(\mM^{\prime})\approx \mJ$ is in general not unique even when we know the polynomial coefficients $\vtheta$. For example, if $\mJ = \mM^2$, the matrix square root equation $(\mM^{\prime})^2 = \mJ$ is solved by $\mU \mathrm{diag}(\mu_1,\cdots,\mu_n) \mU^\intercal$, where $\mu_i = \pm \sqrt{\lambda_i}$. In order to identify the correct sign pattern we need to use additional prior knowledge about the graph, such as sparsity.
In Lemma A.1 of Appendix~A.1, we show that for a general polynomial with known $\vtheta$ the solution is unique only when the convolution kernel $g_{\vtheta}(\lambda)$ is injective (for example, the (matrix) exponential) and the graph matrix has no repeated eigenvalues. When $\vtheta$ is not known, the number of solutions becomes much larger (we always have the trivial solution $\theta^{\prime}_i = \delta_{i,1}$ and $\mM^\prime = \mJ$.) 

In the strongly correlated case, the interaction graph is close to the direct interaction graph in the sense that positive and negative edges can be classified with non-trivial accuracy. While the equation $g_{\vtheta^{\prime}}(\mM^{\prime})\approx \mJ$ may still have many solutions we only want to build a binary classifier which is possible directly from the effective graph. This means that single-step message passing may suffice. However, as we show experimentally in Section~\ref{sec:exp}, even weak indirect interactions trap this architecture in poor local minima. We now discuss how error amplification in polynomial filters solves the problem.

For $
x_i^{t+\delta t} = x_i^{t} + \rho (x_i, \sum_{j\in N_i} M_{i,j} \phi(x_i^{t}, x_j^t) )  
$, changing the value of $M_{i,j}$ only affects the next state of node $i$,  $x_i^{t+\delta t}$. It means that errors in the learned interactions only generate gradients locally on the graph. But if we approximate the effective interaction graph by a polynomial, 
$
    g_{\vtheta} (\mM) = \sum_{k = 0}^K \theta_k \mM^k \approx \mJ,
$
then a perturbation in $M_{i,j}$ propagates through several hops, generating large loss for multiple nodes and thus removing local minima. The issue is that, while the above polynomial equation can always be solved (consider again the trivial assignment $\theta_k = \delta_{k,1}$, $\mM=\mJ$), the solution does not necessarily correspond to the direct interaction graph.

Before we show how to address this issue in Section~\ref{sec:model}, let us demonstrate experimentally how including higher-order terms can make the model more sensitive to errors in the graph and produce better gradients. Let $\mM_\epsilon = \mM + \epsilon \mXi$, where $\mXi$ is a perturbation to the graph matrix $\mM$. Consider the graph filter $\mM_\epsilon + t g_{\vtheta}(\mM_{\epsilon})$, and let $\vy_{t,\epsilon} = (\mM_\epsilon + t g_{\vtheta}(\mM_{\epsilon}))\vx$ be the filtered signal; $\vy_{0,0}=\mM \vx$ corresponds to the unperturbed case. We use cosine similarity $\cos(\vy_{0,0},\vy_{t,\epsilon})$ to quantify the effects of graph perturbations. In Appendix A.2, we show that in the case $\epsilon=0$, the cosine similarity is bounded from below as
$$
\cos(\vy_{0,0},\vy_{t,0}) \geq 1-t^2\vert O_N(1) \vert + O_t(t^3),
$$
which indicates that the cosine similarity is close to one with $t$ being sufficiently small. Then, we show numerically that when $\epsilon$ deviates from zero, the cosine similarity becomes significantly smaller. We consider uniform random noise $\mXi$ and plot the cosine similarity $\cos(\vy_{0,0},\vy_{t,\epsilon})$ versus $\epsilon$ in Figure~\ref{fig:stability}: the similarity decays rapidly when increasing $K$. The above results show that a polynomial graph filter can amplify the noise in the graph when measured through the filtered signals.

\section{GDP for Relational Inference}\label{sec:model}

\paragraph{Graph Generator} We now introduce the full architecture of GDP. We consider the non-amortized setting without an encoder~\cite{lowe2022amortized,zhang2022universal}. We use a simple generator where the probability of an edge $(i, j)$ is
\begin{equation}\label{eqn:generator}
    A^a_{i,j} = \left(\mathrm{Softmax} (\beta [\Psi_{i,j}^0, \Psi_{i,j}^1])\right)_a\ \text{for}\ a\in\{0,1\},
\end{equation}
$\Psi_{i,j}^0, \Psi_{i,j}^1 \in \R$ are trainable latent variables, and $\beta$ is the inverse temperature. Note that $A^1 = \vone \vone^\intercal - A^0$. Including both in the model (as below) significantly speeds up convergence.

\paragraph{Neural Dynamics Surrogate}

We generate two probability matrices, $\mA^{(a)}$ for $a\in\{0,1\}$ and the corresponding (probabilistic) graph filters $\mF^{(a)} = g_{\vtheta}(\tilde{\mA}^{(a)})$, where as before $\tilde{\mA}^{(a)}$ is the symmetric normalized version of $\mA^a$, and the filter coefficients $\vtheta$ are trainable. We use in-degree (for the direction of message-passing) to normalize the adjacency matrix in the directed graph case. The dynamics surrogate predicts the next state as
\begin{equation}\label{eqn:surrogate}
\begin{split}
\tilde{\vh}_{(i,j)}^t & = \sum_{a\in\{0,1\}} F^a_{i,j} \tilde{f}^a_e \left(\vx_i^t,\vx_j^t \right),\\ 
\vx_j^{t+1} & = \vx_j^t + \tilde{f}_v \left(\sum_{i\neq j} \tilde{\vh}^t_{i,j}\right).
\end{split}
\end{equation}
In the above architecture, $\tilde{f}_e^a$ are edge-wise MLPs, and $\tilde{f}_v$ is a vertex-wise MLP. 
As explained in the previous section, a polynomial filter removes the local minima generated by indirect interactions, but it cannot guarantee that the learned $\mA$ is close to direct interactions since the roots of the matrix polynomial are not unique.  Therefore, we simultaneously train another parallel dynamics neural surrogate using only the adjacency matrix, i.e., replacing  $F_{i,j}^a$ by $A^a_{i,j}$ in Equation~\eqref{eqn:surrogate}. The loss is simply the sum of MSEs of each neural surrogate. This strategy encourages the solution to stay close to the ground truth interactions while also leveraging the error amplification from graph polynomials. The overall architecture of the proposed model is shown in Figure~\ref{fig:illustrate}(b). We call the model in Equation~\ref{eqn:surrogate} using the adjacency matrix as $\mA$ model, and the model using a polynomial graph filter as $g_{\vtheta}$ model. The two models work in parallel with shared $\mA$. 
\begin{figure}[t]
\centering
\includegraphics[width=0.9\linewidth]{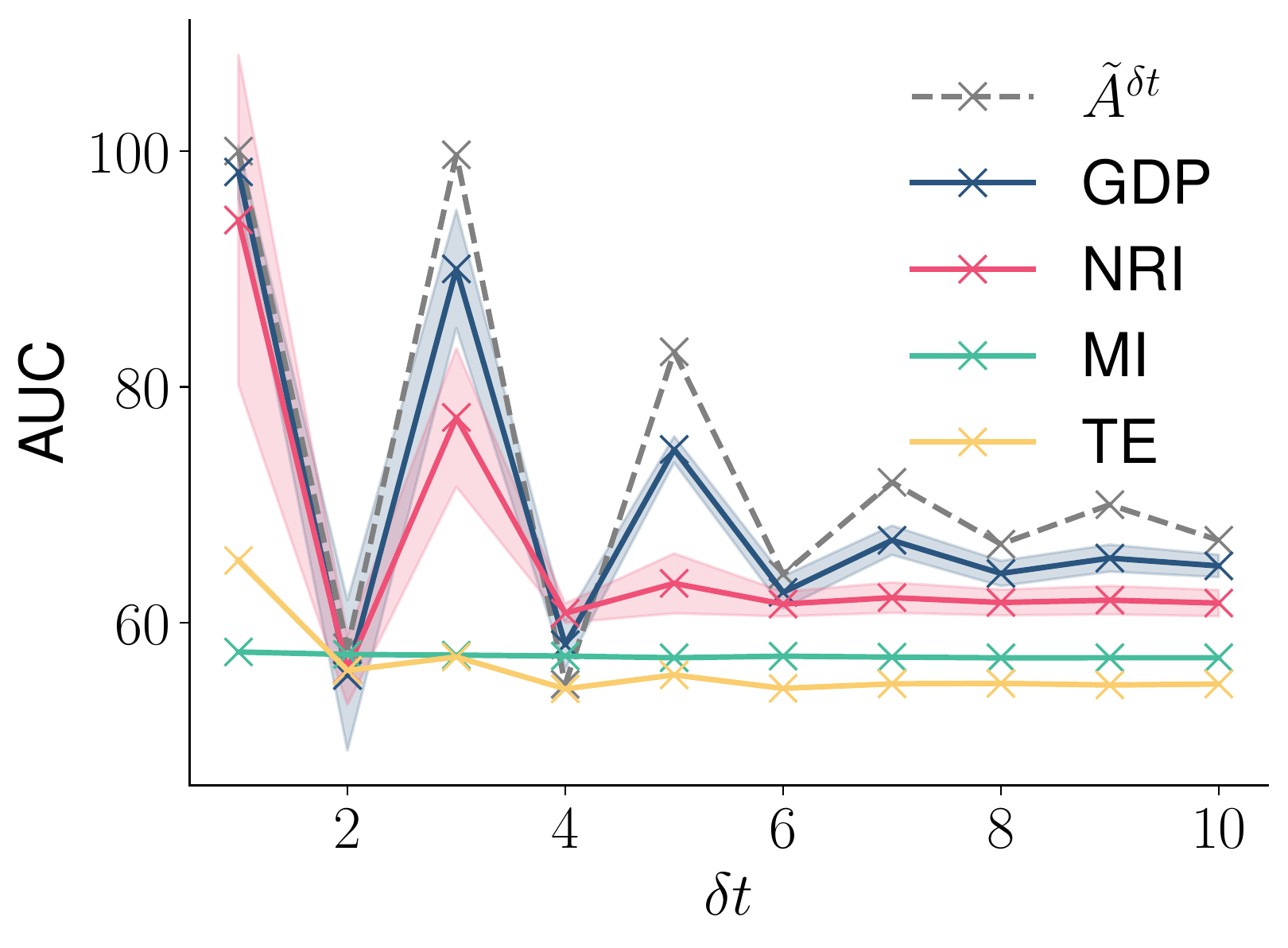}
\caption{AUC versus sampling rate $\delta t$. The gray dashed line denotes the results from explicit interaction graph. For all the experiments, we keep the volume of training data ($\#$trajectories $\times$ $\#$sampled steps) identical. The results are obtained in ER-30 with data volume $30\times 20$.}
\label{fig:interval}
\end{figure}
 
\subsection{Application to Stochastic Dynamics (fMRI)}

We further show that with appropriate modifications GDP can be adapted to work with stochastic systems such as functional brain region dynamics in fMRI. Indeed, with an addition of temporal smoothing it performs considerably better than a single-step model and as well as the state-of-the-art method based on mutual information, but unlike that method we learn a model for the dynamics. Using multiple $\mA$- or $g_{\vtheta}$-models and deeper GNNs further improves performance.
We leave a more detailed analysis of stochastic dynamics for future work.

\section{Experiments}{\label{sec:exp}}

We consider several representative graph dynamical systems, both continuous and discrete, to validate the proposed algorithm. The continuous-time systems include (\romannumeral1) the Michaelis--Menten kinetics~\cite{karlebach2008modelling}, a model for gene regulation circuits; (\romannumeral2) Rössler oscillators~\cite{rossler1976equation} on graphs, which can generate chaotic dynamics; (\romannumeral3) diffusion, which is a simple a continuous-time linear dynamics; (\romannumeral4) a network-of-springs model which describes particles interacting via Hooke's law; and (\romannumeral5) the Kuramoto model~\cite{kuramoto1975self} which is a network of phase-coupled oscillators. The discrete-time systems include (\romannumeral6) Friedkin-Johnsen dynamics~\cite{friedkin1990social}, a classical model for describing opinion formation~\cite{abebe2018opinion,okawa2022predicting}, polarization and filter bubble~\cite{chitra2020analyzing} in social networks; and
(\romannumeral7) the coupled map
network (CMN)~\cite{garcia2002coupled}, a discrete-time model with chaotic behavior. Moreover, we considered a publicly available fMRI dataset (\romannumeral8) Netsim~\cite{smith2011network}, comprising realistic simulated data. A more detailed description of the dynamics and and data generation details can be found in Appendix~B.1. The graphs in all datasets but Netsim are undirected. We include further experiments on directed graphs in Appendix C.5. Importantly, we also carry out experiments on real-world data in Appendix C.6 (a dataset of traffic information and a gene regulatory network of S. cerevisiae yeast), and analyze the impacts of graph topology on inference accuracy in Appendix C.7.

\subsection{Results on Relational Inference}

We compare GDP to several baselines. The first one is NRI. The original NRI is designed for the amortized setting where the trajectories do not share the underlying graph. As we primarily consider the classical non-amortized case, we use the version of NRI without a graph encoder~\cite{lowe2022amortized}. We further consider two statistical approaches based on mutual information (MI) and transfer entropy (TE). Implementation details for the baselines can be found in Appendix~B.2. The hyperparameters for GDP are summarized in Appendix~B.3. As Equation~\ref{eqn:surrogate} is invariant to swaps of edge types $a\in \{0,1\}$, it is possible that the learned graph corresponds to the complement graph of direct interactions. This ambiguity is discussed in Appendix~B.4.

We apply GDP to the simulated systems and compare it to the baselines. We conduct experiments on Erdős–Rényi (ER) and Barabási--Albert (BA) graphs of different sizes and measure the interaction recovery accuracy by AUC. 
Table~\ref{tab:res} summarizes the average AUC with standard deviations. The first four columns describe the dynamical model, graph type, sampling rate and data volume, respectively. As both NRI and GDP can reach higher accuracy with increasing data volume, the volume of data in Table~\ref{tab:res} is determined by the rough criterion that GDP reaches above $90$ AUC in the short sampling interval case. The data volumes are kept the same when increasing the sampling interval.

From the experiments, GDP significantly improves the baseline methods. For example, in the Michaelis--Menten model, GDP reaches good accuracy when the other baselines cannot recover helpful information (with an AUC of about $50.00$). 
GDP shows remarkable robustness to under-sampling. 
For example, for the Spring model in BA-50, while NRI and GDP generate accurate predictions with small sampling intervals, GDP degrades much less when increasing the sampling interval. The improvement is not limited to the large sampling rate case, as the error amplifier mechanism of the polynomial filter still works in this case. A phenomenon worth noticing is that the results generally display large fluctuations, which suggests that the loss landscape has many poor local minima. Using a polynomial filter helps escape these poor minima and improves the inference accuracy. In Appendix~C.2, we further study the dependence of model performance on the polynomial order $K$. In Appendix~C.3, we perform ablation studies to show that using only the polynomial filter is insufficient to generate stable predictions, as, in general, multiple graph matrices can result in the same polynomial filters. An interesting phenomenon is that a ``good'' graph for predicting the dynamics turns out to be close to the true graph. We further verify the that the true graph is a local attractor for our model in Appendix~C.4.

\begin{figure}[t]
\centering
\includegraphics[width=0.95\linewidth]{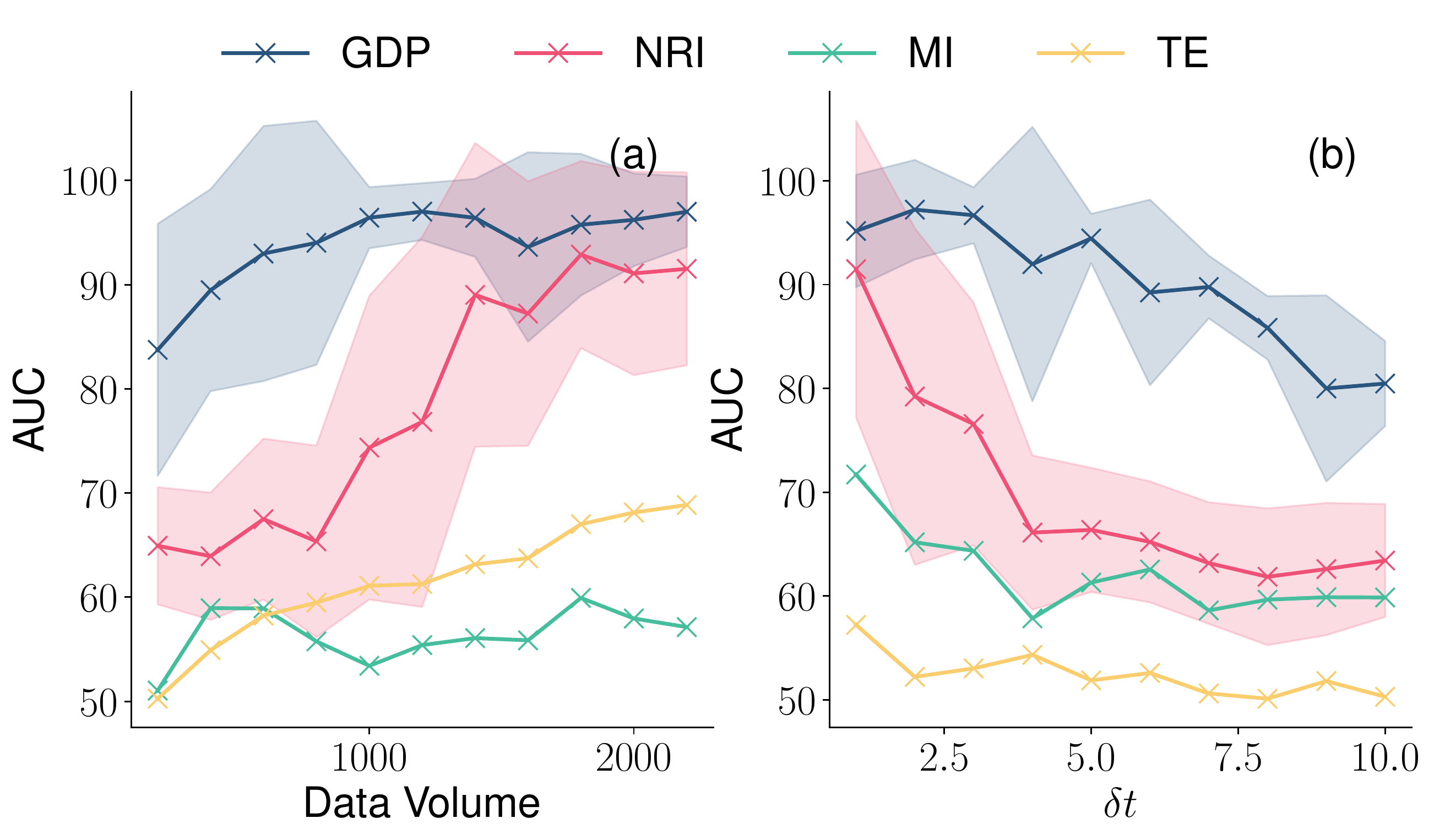}
    \caption{AUC versus (a) data volume and (b) sampling rate $\delta t$ . The experiments are performed on an ER-20 graph. In (a), the trajectory length and sampling interval are fixed as $10$ and $5$. We increase the data volume by including more trajectories. In (b), the data volume is fixed as of $50\times 10$. }
\label{fig:interval2}
\end{figure}

\subsection{Robustness to Sampling Rate and Data Volume}
\label{sec:interval}

We analyze the dependence of GDP's performance on sampling rates and data volume. For discrete-time linear dynamics $\vx^{t+1}=\tilde{\mA}\vx^t$, there is little correlation between the effective and direct interaction graph $\tilde{\mA}$ for even sampling intervals. This is confirmed using RI algorithms, depicted in Figure~\ref{fig:interval}. The gray dashed line displays results from $\tilde{\mA}^{\delta t}$. Predictably, even-interval sampling confuses GDP and the three baselines, preventing recovery of direct interactions; imposing constraints such as sparsity seems essential.

The results from $\tilde{\mA}^{\delta t}$ provide a rough upper bound for inference accuracy if we only use the effective interactions as the prediction. Both GDP and the other three baselines are below this line, which suggests that even when some positive edges are, in principle, distinguishable, these algorithms can still not find them. The discrepancy between the upper bound and algorithmic results is more pronounced when $\delta t$ is increased (for odd $\delta t$). When $\delta t=3$, the bound is $\approx 100$, close to the $\delta t=1$ case, but both NRI and GDP are further away in the former case. GDP nonetheless performs the best at all sampling rates.

We consider the Michaelis--Menten model for continuous-time dynamics. We increase $\delta t$ while keeping the volume of data fixed. The results are in Figure~\ref{fig:interval2} (b). For all algorithms the AUC decreases with $\delta t$, suggesting that direct and indirect interactions are more easily confused at larger sampling intervals. Still, GDP shows better robustness to the sampling rates. We next increase the number of training trajectories; Figure~\ref{fig:interval2} (a) shows the results. GDP performs best in all cases. It is the least sensitive to data volume and accurate even with small data sets. This may be essential in real applications where samples are hard or expensive to get.

\subsection{Polynomial Filters Help Escape Bad Local Minima}

\begin{figure}[t]
\centering
\includegraphics[width=0.765\linewidth, ]{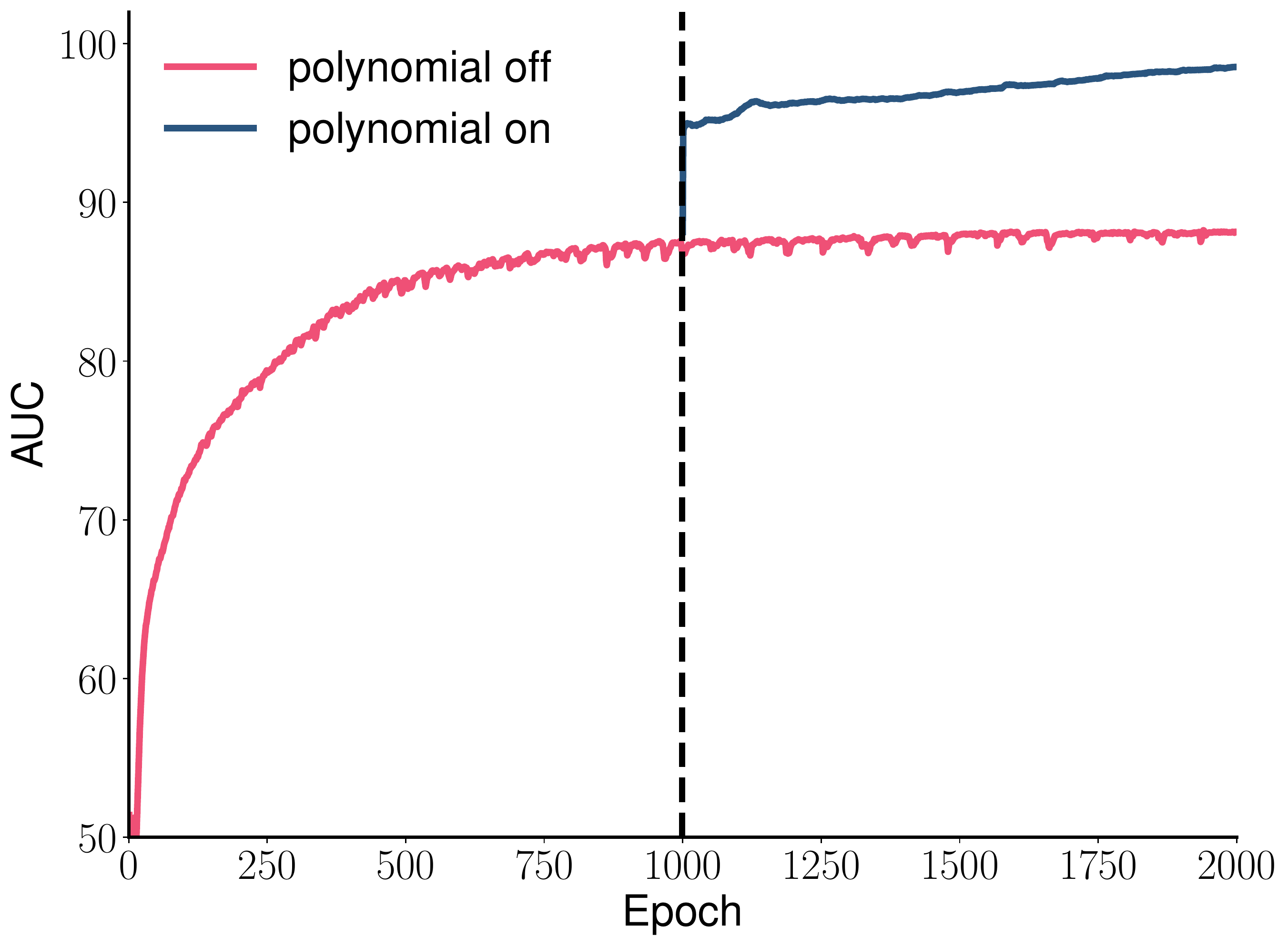}
\caption{Polynomial graph filters help to escape from local minima. The experiments are performed with Kuramoto model on an ER-50 graph.}
\label{fig:escape}
\end{figure}

We design experiments to provide empirical evidence that polynomial filters can help escape bad local minima.
We begin by training a model with only the one-step message passing part; once the AUC reaches a plateau we switch on the polynomial part. Figure~\ref{fig:escape} plots the evolution of the AUC over training epochs. The blue curve corresponds to when we insist on the one-step-only model: it stays approximately constant after $1000$ epochs. After activating the polynomial part, the AUC increases sharply in a single epoch, signaling that we immediately obtained a much more accurate graph. This phenomenon suggests that the polynomial filter indeed produces gradients that help escape the poor local minimum to which a one-step model converged.

\section{Discussion}

The experiments show that in a broad range of qualitatively diverse dynamical systems and for a broad range of sampling rates, the non-local neural surrogate in GDP indeed induces a favorable inductive bias and removes poor local minima that cause problems for the earlier local models. As a result, GDP achieves state-of-the-art performance across the board, often by a large margin and at much lower data volumes. We considered a setting where all trajectories share the same graph but our model can be extended to the amortized setting by using a graph encoder which preserves permutation invariance. The proposed model could also be adapted to make causal claims by using only the adjacency part at test time, as was done in  \cite{lowe2022amortized}.

GDP, and other NRI-type methods in general still have several limitations. Firstly,
although we have incorporated polynomial filters to address the non-local interactions induced by coarse temporal sampling, GDP works when the effective interaction graph is strongly correlated with the direct interaction graph. This seems to be the case for all combinations of dynamics and sampling rates we tested, but a more challenging task is to look at the strongly mixed case where there is only a weak correlation between the effective and the true graph. Secondly, we currently consider all nodes in the network to be observed; in practice, we only get partial observation whose topology may change with time. Thirdly, GDP may not be suitable in situations where the nodes are highly heterogeneous as in, for example, some metabolic networks. Finally, our claims and understanding of the method are currently based on heuristics and experiment; a precise theory is yet to be worked out.

\section*{Acknowledgments}
LP would like to acknowledge support from National Natural Science Foundation of China under Grand No.~62006122 and 42230406. CS and ID were supported by the European Research Council (ERC) Starting Grant 852821---SWING.

\bibliography{aaai24}

\appendix

\setcounter{table}{0}
\renewcommand{\thetable}{A\arabic{table}}
\setcounter{figure}{0}
\renewcommand{\thefigure}{A\arabic{figure}}
\newpage

\noindent{\Large \textbf{Technical Appendix}}

\section{Some Proofs}
\subsection{Graph Retrieval}\label{sec:retrieval}
\begin{lemma}
Let $g_{\vtheta}:\mathbb{R}\to\mathbb{R}$ be a polynomial convolution kernel,  $s_i$ the number of real roots of $g_{{\vtheta}}(x) - g_{\vtheta}(\lambda_i) = 0$, and $\mM$ a symmetric graph matrix. If (\romannumeral1) $\mM$ has distinct nonzero eigenvalues $\{\lambda_1, \lambda_2 \cdots , \lambda_n\}$ and (\romannumeral2) the convolution kernel $g_{\theta}$ is injective, i.e., $s_i=1$ for all $\lambda_i$, and $g_{\vtheta}(x)\neq 0$ for $x\neq 0$, then the matrix equation $g_{\vtheta}(\mM^\prime) = g_{\vtheta}(\mM)$ has a unique solution $\mM^\prime=\mM$. Otherwise, it has at least $\prod_{i=1}^n s_i$ solutions.  
\end{lemma}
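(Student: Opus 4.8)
The plan is to reduce the matrix equation $g_{\vtheta}(\mM') = g_{\vtheta}(\mM)$ to $n$ independent scalar equations by simultaneous diagonalization, using the elementary fact that any solution $\mM'$ must commute with $g_{\vtheta}(\mM)$.

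I would first dispatch the ``at least $\prod_i s_i$'' bound, which requires nothing beyond symmetry of $\mM$. Diagonalize $\mM = \mU\mLambda\mU^\top$ with $\mLambda = \mathrm{diag}(\lambda_1,\dots,\lambda_n)$ and $\mU$ orthogonal. For each $i$, choose any of the $s_i$ real roots $\mu_i$ of $g_{\vtheta}(x) - g_{\vtheta}(\lambda_i) = 0$ and set $\mM' = \mU\,\mathrm{diag}(\mu_1,\dots,\mu_n)\,\mU^\top$. Then $g_{\vtheta}(\mM') = \mU\,\mathrm{diag}\!\big(g_{\vtheta}(\mu_1),\dots,g_{\vtheta}(\mu_n)\big)\,\mU^\top = \mU\,\mathrm{diag}\!\big(g_{\vtheta}(\lambda_1),\dots,g_{\vtheta}(\lambda_n)\big)\,\mU^\top = g_{\vtheta}(\mM)$, and since $D\mapsto\mU D\mU^\top$ is injective, distinct tuples $(\mu_1,\dots,\mu_n)$ give distinct solutions. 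This yields at least $\prod_{i=1}^n s_i$ distinct solutions (infinitely many if some $s_i=\infty$, e.g.\ for a constant kernel).

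For the uniqueness half I would show that under (i) and (ii) the construction above exhausts all solutions; since then every $s_i = 1$, the unique solution is $\mM' = \mM$. The key steps: (a) any $\mM'$ with $g_{\vtheta}(\mM') = g_{\vtheta}(\mM)$ commutes with $g_{\vtheta}(\mM)$, because $g_{\vtheta}(\mM')$ is a polynomial in $\mM'$; (b) under (i)+(ii) the eigenvalues $g_{\vtheta}(\lambda_1),\dots,g_{\vtheta}(\lambda_n)$ are pairwise distinct --- if $g_{\vtheta}(\lambda_i) = g_{\vtheta}(\lambda_j)$ with $i\neq j$ then, as $\lambda_i\neq\lambda_j$ by (i), $g_{\vtheta}(x) - g_{\vtheta}(\lambda_i)$ would have at least two real roots, contradicting $s_i = 1$; (c) hence $g_{\vtheta}(\mM)$ is symmetric with simple spectrum and eigenbasis $\mU$, so any real matrix commuting with it is diagonal in that basis, i.e.\ $\mM' = \mU\,\mathrm{diag}(\mu_1,\dots,\mu_n)\,\mU^\top$ with $\mu_i\in\mathbb{R}$ --- note this simultaneously shows a real solution is automatically symmetric, which is what makes ``number of \emph{real} roots'' the correct count; (d) comparing $g_{\vtheta}(\mM') = g_{\vtheta}(\mM)$ in the basis $\mU$ forces $g_{\vtheta}(\mu_i) = g_{\vtheta}(\lambda_i)$, and $s_i = 1$ gives $\mu_i = \lambda_i$, so $\mM' = \mM$.

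The step I expect to require the most care is (c): I need the ``commutes with a symmetric matrix $\Rightarrow$ block-diagonal with respect to its eigenspaces'' fact in the right generality, and for the bound in the general ``otherwise'' case I should note that when $\mM$ has repeated eigenvalues the solution set can be a genuine continuum, so $\prod_i s_i$ is only a lower bound. I also only actually invoke the clause ``$s_i = 1$ for all $i$'' from hypothesis (ii); the extra requirement $g_{\vtheta}(x)\neq 0$ for $x\neq 0$ does not seem to enter the symmetric-matrix argument, and I would double-check whether it is needed to exclude some singular or non-symmetric edge case or whether it can be dropped.
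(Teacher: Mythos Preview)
Your argument is correct and follows essentially the same route as the paper: commute $\mM'$ with $g_{\vtheta}(\mM)$, use simplicity of the spectrum of $g_{\vtheta}(\mM)$ to force $\mM'$ diagonal in the eigenbasis $\mU$, and reduce to the scalar equations $g_{\vtheta}(\mu_i)=g_{\vtheta}(\lambda_i)$; your treatment of the lower bound and of step~(c) is in fact a bit more careful than the paper's. Your suspicion about the clause $g_{\vtheta}(x)\neq 0$ for $x\neq 0$ is also right---the paper invokes it only to conclude that $g_{\vtheta}(\mM)$ has distinct \emph{nonzero} eigenvalues, but ``distinct'' alone already pins down the eigenbasis, so that clause can be dropped from the hypotheses.
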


\begin{proof}
Since $g_{\vtheta}(\mM^\prime)$ is polynomial,  $\mM^\prime$ and $g_{\vtheta}(\mM^\prime)$ commute; therefore $\mM^\prime$ and $g_{\vtheta}(\mM)$ commute since $g_{\vtheta}(\mM^\prime) = g_{\vtheta}(\mM)$. Thus $\mM^\prime$ and $g_{\vtheta}(\mM)$ are simultaneously diagonalizable. When $\mM$ has distinct nonzero eigenvalues and $g_{\vtheta}$ is injective and $g_{\theta}(x)\neq 0$ for $x\neq 0$, then also $g(\mM)$ has distinct nonzero eigenvalues. Then the basis in which $\mM^{\prime}$ and $g(\mM)$ are simultaneously diagonal is unique (up to permutations) and equal to the eigenbasis $\mU$ of $\mM$.  Denote the correspondingly diagonalized versions of $\mM$ and $\mM^\prime$ by $\mLambda$ and $\mLambda^{\prime}$, respectively.  
Then we have that $\mU g_{\vtheta}(\mLambda) \mU^{\intercal} = \mU g_{\vtheta}(\mLambda^{\prime}) \mU^{\intercal}$. Equating the diagonal entries gives 
$
g_{\vtheta}(\lambda_i) - g_{\vtheta}(\lambda^{\prime}_i) = 0.
$
If $g_{\vtheta}$ is injective, this implies $\lambda_i = \lambda_i^\prime$, and the solution is unique with $\mA=\mA^{\prime}$.

If $g_{\vtheta}$ is not injective, or the graph matrix $\mM$ itself has repeated eigenvalues, then $g_{\vtheta}(\mM)$ may have repeated eigenvalues and there may be multiple diagonalizing bases (with the eigenbasis $\mU$ being one of them).
For the eigenbasis, the equation $
g(\lambda_i) - g(\lambda_i^\prime) = 0
$ has $s_i$ solutions. Picking any of the $s_i$ solutions for each $i$ gives a solution to the matrix equation $g(\mM^\prime) = g(\mM)$, so there are $\prod_{i=1}^n s_i$ solutions. Therefore, there are at least $\prod_{i=1}^n s_i$ solutions.
\end{proof}

\subsection{Stability to Graph Perturbations}
Let $\mM_\epsilon = \mM + \epsilon \mXi$, where $\mXi$ is a perturbation to the graph matrix $\mM$. We use the perturbed matrix to define the graph filter $\mM_\epsilon + t g_{\vtheta}(\mM_{\epsilon})$. Let $\vy_{t,\epsilon} = (\mM_\epsilon + t g_{\vtheta}(\mM_{\epsilon}))\vx$ be the filtered signal. Note that $\vy_{0,0}=\mM \vx$ correspond to the unperturbed case. We can write $\vy_{t,\epsilon} = \vy_{0,0}+\Delta \vy_{t,\epsilon}$, where $\Delta \vy_{t,\epsilon} = \epsilon \mXi \vx + t g_{\vtheta} (\mM_{\epsilon})\vx$. Therefore, the cosine similarity $\cos(\vy_{t,\epsilon}, \vy_{0,0})$ depends on the competition between $\vy_{0,0}$ and $\Delta \vy_{t,\epsilon}$. The noise amplifier effect corresponds to the following scenario: with $t$ being sufficiently small, when $\epsilon = 0$ the cosine similarity is close to one; while when $\epsilon$ deviates from zero, the cosine similarity becomes significantly smaller. We show that the polynomial graph filter does help to achieve this effect. We prove the following result:

\begin{lemma}
When $\epsilon=0$, the cosine similarity between $\vy_{0,0}$ and $\vy_{t,0}$ is bounded from below as 
$$
\cos(\vy_{0,0},\vy_{t,0}) \geq 1-t^2\vert O_N(1) \vert + O_t(t^3).
$$
\end{lemma}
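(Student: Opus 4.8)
The plan is to reduce the claim to an elementary second--order Taylor expansion of the cosine similarity in the scalar $t$. When $\epsilon=0$ we have $\vy_{0,0}=\mM\vx$ and $\vy_{t,0}=\vy_{0,0}+t\,g_{\vtheta}(\mM)\vx$, so $\vy_{t,0}$ lies on the line through $\vy_{0,0}$ in the fixed direction $g_{\vtheta}(\mM)\vx$. Setting $a:=\|\vy_{0,0}\|$, $b:=\|g_{\vtheta}(\mM)\vx\|$, $c:=\langle\vy_{0,0},g_{\vtheta}(\mM)\vx\rangle$, and assuming the generic condition $a\neq 0$, one has the exact identity
\[
\cos(\vy_{0,0},\vy_{t,0})=\frac{\langle\vy_{0,0},\vy_{0,0}+t\,g_{\vtheta}(\mM)\vx\rangle}{\|\vy_{0,0}\|\,\|\vy_{0,0}+t\,g_{\vtheta}(\mM)\vx\|}=\frac{a^{2}+tc}{a\sqrt{a^{2}+2tc+t^{2}b^{2}}}.
\]

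Next I would expand the denominator. Writing $a\sqrt{a^{2}+2tc+t^{2}b^{2}}=a^{2}(1+u)^{1/2}$ with $u=2tc/a^{2}+t^{2}b^{2}/a^{2}=O_t(t)$ and using $(1+u)^{1/2}=1+\tfrac{u}{2}-\tfrac{u^{2}}{8}+O_t(t^{3})$, one obtains $a\sqrt{a^{2}+2tc+t^{2}b^{2}}=a^{2}+tc+\tfrac{t^{2}}{2}\big(b^{2}-c^{2}/a^{2}\big)+O_t(t^{3})$. Dividing the numerator $a^{2}+tc$ by this and expanding $\tfrac{1}{1+x}=1-x+O(x^{2})$ with $x=\tfrac{t^{2}}{2a^{2}}\big(b^{2}-c^{2}/a^{2}\big)+O_t(t^{3})=O_t(t^{2})$ gives
\[
\cos(\vy_{0,0},\vy_{t,0})=1-\frac{t^{2}}{2}\cdot\frac{a^{2}b^{2}-c^{2}}{a^{4}}+O_t(t^{3}).
\]

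Finally I would invoke Cauchy--Schwarz: $a^{2}b^{2}-c^{2}=\|\vy_{0,0}\|^{2}\|g_{\vtheta}(\mM)\vx\|^{2}-\langle\vy_{0,0},g_{\vtheta}(\mM)\vx\rangle^{2}\ge 0$, so the quadratic coefficient is nonnegative; moreover it is a fixed finite number determined by the fixed data $\mM,\vx,\vtheta$ and the dimension $n$, hence $(a^{2}b^{2}-c^{2})/(2a^{4})=O_N(1)$ and is independent of $t$. Substituting yields $\cos(\vy_{0,0},\vy_{t,0})=1-t^{2}\,|O_N(1)|+O_t(t^{3})\ge 1-t^{2}\,|O_N(1)|+O_t(t^{3})$, which is the claimed bound. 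The only step that needs genuine care is the order bookkeeping across the two nested expansions --- in particular checking that the cubic remainder of $(1+u)^{1/2}$ and the cross terms produced by $\tfrac{1}{1+x}$ are truly $O_t(t^{3})$ --- together with recording the standing assumption $\mM\vx\neq 0$; the sign of the coefficient is immediate from Cauchy--Schwarz and, as the numerics in Figure~\ref{fig:stability} illustrate, it is precisely this coefficient that ceases to be forced small once $\epsilon\neq 0$.
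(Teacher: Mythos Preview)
Your argument is essentially the same second-order Taylor expansion as the paper's: both compute the quotient to order $t^2$ and land on the identical coefficient $\tfrac12(F-E^2)=\tfrac{a^2b^2-c^2}{2a^4}$, and both note its nonnegativity (you via Cauchy--Schwarz, the paper via the equivalent factorization $F\sin^2(\mM\vx,g_{\vtheta}(\mM)\vx)$). The only substantive difference is in the $O_N(1)$ step. The subscript $N$ is meant to signal a bound uniform in the graph size; you justify $O_N(1)$ by saying the coefficient is ``a fixed finite number determined by the fixed data $\mM,\vx,\vtheta$ and the dimension $n$'', but $\mM$ and $\vx$ themselves change with $N$, so finiteness for each fixed instance does not yield uniform boundedness in $N$. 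The paper instead bounds $F=\|g_{\vtheta}(\mM)\vx\|^2/\|\mM\vx\|^2$ as a generalized Rayleigh quotient, identifies its maximum with the largest eigenvalue of $\mM^{-2}g_{\vtheta}(\mM)^2$, and then appeals to the fact that for normalized graph matrices such as $\tilde{\mA}$ or $\tilde{\mL}$ the spectrum lies in a fixed interval independent of $N$, so this eigenvalue is $O_N(1)$. To match the paper's claim you should supply this spectral bound rather than stop at mere finiteness; the rest of your write-up is correct and, if anything, cleaner than the paper's since you specialize to $\epsilon=0$ from the outset.
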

\begin{proof}
By definition
$$
\cos(\vy_{0,0},\vy_{t,\epsilon}) = \frac{\vx^\intercal \mM(\mM_{\epsilon} + t g_{\vtheta}(\mM_{\epsilon})\vx}{\Vert \mM \vx \Vert \Vert (\mM_{\epsilon} + t g_{\vtheta}(\mM_{\epsilon})) \vx\Vert}.
$$
The second norm in the denominator can be written as
\begin{equation*}
    \begin{split}
&\Vert (\mM_{\epsilon} + t g_{\vtheta}(\mM_{\epsilon})) \vx\Vert^2 \\
&= \Vert \mM_{\epsilon} \vx \Vert^2 + 2 t \vx^\intercal \mM_{\epsilon} g_{\vtheta}(\mM_{\epsilon}) \vx + t^2\Vert g_{\vtheta}(\mM_{\epsilon}) \vx \Vert^2.        
    \end{split}
\end{equation*}
To lighten the notation, we denote 
$$
E = \frac{\vx^\intercal \mM_{\epsilon} g_{\vtheta}(\mM_{\epsilon}) \vx}{\Vert \mM_{\epsilon} \vx \Vert^2},\ F = \frac{\Vert g_{\vtheta}(\mM_{\epsilon}) \vx \Vert^2 }{\Vert \mM_{\epsilon} \vx \Vert^2}
$$
and $C_{\epsilon} = \cos(\vy_{0,0},\vy_{0,\epsilon})$. We expand the cosine similarity when $t$ is sufficiently small, yielding:
\begin{equation*}
    \begin{split}
        &\cos(\vy_{0,0},\vy_{t,\epsilon})= \\
        &(C_{\epsilon} + t E)(1-tE-\frac{t^2}{2}F + \frac{3t^2}{2}E^2 +O_t(t^3)) = \\
        & C_{\epsilon} + t(1-C_{\epsilon})E - \frac{t^2}{2}(C_{\epsilon}F-3 C_{\epsilon}E^2 + 2 E^2) + O_t(t^3).
    \end{split}
\end{equation*}
When $\epsilon =0$, we have $C_{\epsilon}=1$, then
$$
\cos(\vy_{0,0},\vy_{t,0}) = 1 -  \frac{t^2}{2}(F- E^2) + O(t^3).
$$
Then
The term $F-E^2$ can be further written as
$$
F-E^2 = F \sin^2\left( \mM\vx, g_{\vtheta} (\mM) \vx \right).
$$
When $\mM$ is invertible, the amplitude $F$ is a generalized Rayleigh quotient
$$
F = \frac{\vx^\intercal g_{\vtheta}(\mM)^2 \vx}{\vx^\intercal \mM^2 \vx},
$$
whose maximum is the generalized eigenvalue $\lambda_{\mathrm{max}}$ given by
$$
g_{\vtheta}(\mM)^2 \vx = \lambda_{\mathrm{max}} \mM^2 \vx,
$$
or $\mM^{-2} g_{\vtheta}(\mM)^2 \vx = \lambda_{\mathrm{max}}\vx$. When $\epsilon=0$, $\mM_{\epsilon}$ reduces to the noise-free graph filter $\mM$. For typical graph filters such as $\tilde{\mA}$ and $\tilde{\mL}$, $\lambda_{\mathrm{max}}$ is bounded from above by $\vert O_N(1)\vert$. Using
$\sin^2(\mM \vx, g_{\vtheta}(\mM))\leq 1$, 
we have 
$$
\cos(\vy_{0,0},\vy_{t,0}) \geq 1-t^2\vert O_N(1) \vert + O_t(t^3).
$$
and the claim follows.
\end{proof}

\begin{figure}[t]
\centering
\includegraphics[width=0.9\linewidth]{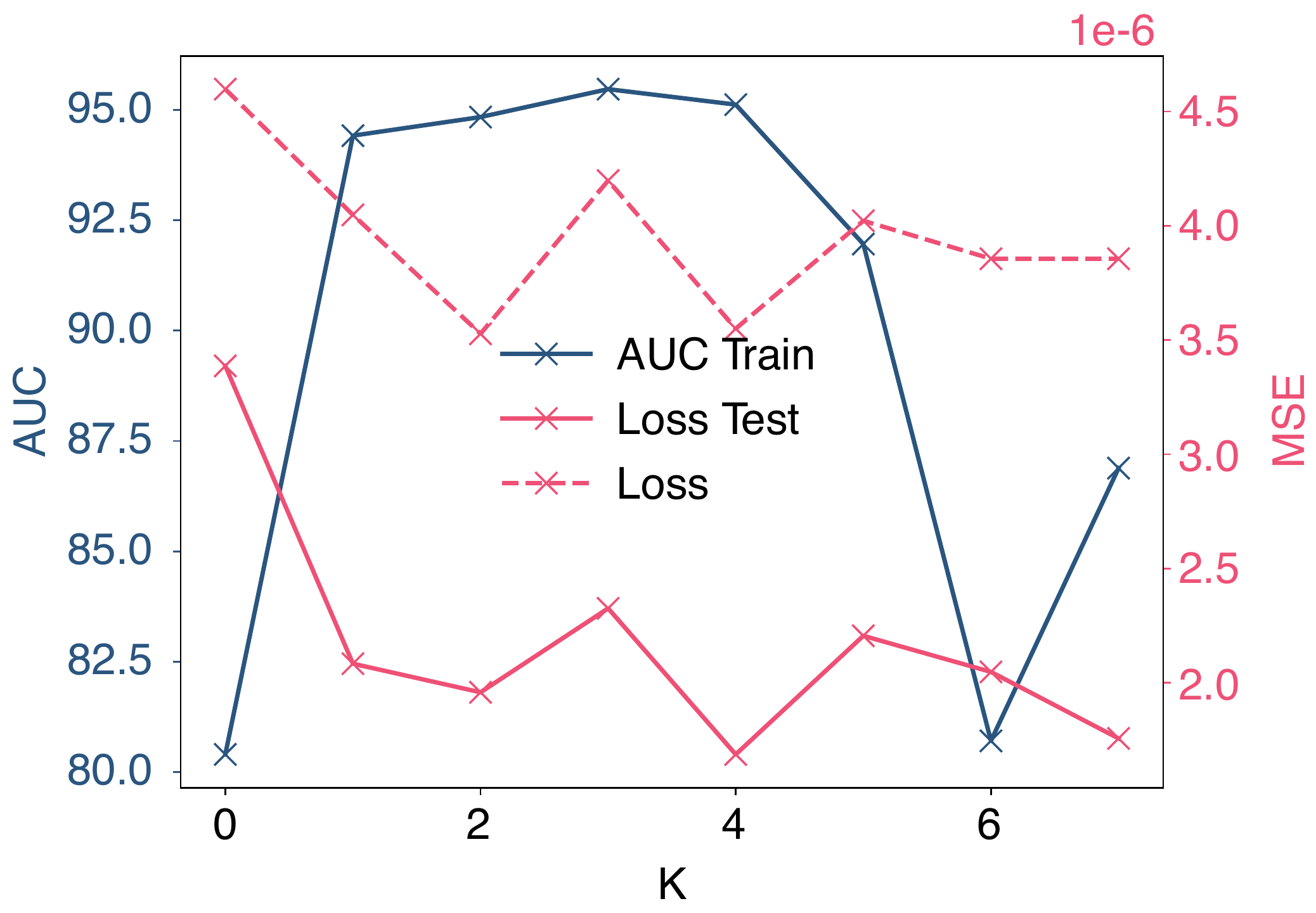}
    \caption{The dependency of model performance on polynomial order $K$. The results are averaged over $10$ independent runs.}
\label{fig:kdepend}
\end{figure}

\begin{table*}[t]
\centering
\caption{Relational inference accuracy measured by AUC. The last column shows results by stacking two message-passing layers.}
\begin{center}
\begin{tabular}{@{}lccccccc@{}}
\toprule
        Model & Graph & $\delta t$ & Volume & MI & TE & NRI & NRI-2 \\ 
 \midrule
\textbf{Spring} & ER-50 & $20$ & $15\times 10$ &  72.24&76.05    & \textbf{99.84±0.47} & 51.01±0.64 \\
\textbf{FJ} & ER-50 & $1$ & $20\times 10$  &53.66&83.64& \textbf{97.67±1.06} & 53.81±1.64\\
\bottomrule
\end{tabular}
\end{center}
\label{tab:stack}
\end{table*}

\begin{table}[t]
\centering
\begin{tabular}{@{}lccccccc@{}}
\toprule
Model & Graph & $T_o$ & Seed &  $\mA$ & $\tilde{\mA}$ & $g_{\theta}(\tilde{\mA})$\\ 
\midrule
\textbf{Diffusion} & ER-50 & $1$ & 0  & 85.90 & \textbf{92.04} & 90.47 \\
 &  & $1$ & 1    & 80.74 & 87.92 & \textbf{91.23}\\
 &  & $1$ & 2    & \textbf{93.86} & 84.49 & 80.77\\
 &  & $1$ & 3    & 87.79 & \textbf{93.04} & 90.86\\
 & ER-50 & $4$ & 0  & \textbf{91.51} & 86.89 & 80.21\\
 &  & $4$ & 1  & 86.71 & \textbf{87.07} & 82.27\\
  &  & $4$ & 2  & \textbf{78.35} & 70.96 & 66.95\\
 &  & $4$ & 3  & \textbf{85.50} & 76.51 &76.80\\
\bottomrule
\end{tabular}
\caption{Interaction graph AUC when using only the polynomial model.  The data volume is set to be $20\times 10$. Each row corresponds to an independent run with a different random seed. Boldface marks highest accuracy. \label{tab:seed}}
\end{table}

\section{Experiment Details}
\subsection{Dynamical Models and Data Generation}
\label{sec:data}

Here we summarize the details of dataset generation. For all experiments, we set the number of validation trajectories to $10$. The sampling interval and the number of sampled trajectories are identical to the training set. The number of sampled trajectories and time steps are shown in Table~1 in the main text. All trajectories are normalized to $[-1,1]$ before feed into training for NRI and GDP.
In Table~1, for ER graphs the link probability is set to $0.1$; for BA graphs, each new node is connected to $m=2$ existing nodes.\\[1mm]

\noindent{\textbf{Michaelis-Menten Kinetics}}
Michaelis–Menten kinetics~\cite{karlebach2008modelling} describes enzymatic reaction kinetics. The node state $x_i$ is one-dimensional and corresponds to concentrations of molecular species. Its evolution reads
$$
\dot{x}_i = - x_i + \frac{1}{\vert N_i \vert}\sum_{j\in N_i }\frac{x_j}{1+x_j},
$$
The equation is integrated with an ODE solver with the step size to $\delta t = 1$.\\[1mm]

\noindent{\textbf{Rössler Oscillators}}
A Rössler oscillator~\cite{rossler1976equation} is a three-dimensional dynamical system composed of one nonlinear and two linear equations. The model has been extended to graphs, where  node-wise Rössler oscillatosr are coupled through the edges. The model has been used to study synchronization in power grids and neuron oscillations. The state $\vx_i$ of each node is three-dimensional, where the evolution~\cite{casadiego2017model} reads:
\begin{equation*}
\begin{split}
 \dot{x}_{i,1} & = - x_{i,2} - x_{i,3} + \frac{1}{\vert N_i \vert} \sum_{j\in N_i} \sin \left(x_{j,1}\right),\\  
 \dot{x}_{i,2} &= x_{i,1} + 0.1 x_{i,2}, \\
 \dot{x}_{i,3} &= 0.1 + x_{i,3} \left(x_{i,3} - 18\right).
\end{split}    
\end{equation*}
When integrating the model, we set $\delta t =1$.\\[1mm]

\noindent \textbf{Diffusion}. The graph diffusion equation is $\dot{\vx} = \tilde{\mL} \vx$. We use the scale Laplacian $\mI - \tilde{\mL}$ and set $\delta t = 0.1$.\\[1mm]

\noindent \textbf{Springs}. 
The springs model describes particles connected by springs and interact via Hooke's law. The model was considered in NRI for testing the algorithm. We use the same setup as NRI, where particles are confined in a 2D box with elastic boundaries, and each particle is described by the location $\vr_i \in \R^2$ and velocity $\vv_i \in \R^2$.
The system's ODE reads 
\begin{equation}
    \frac{d\vr_i}{dt} = \vv_i,\qquad \frac{d\vv_i}{dt} = -k\sum_{j\in N_i} \left( \vr_i - \vr_j\right). 
\end{equation}
We use the trajectories of $\vr_i$ and $\vv_i$ as observed data. 
The size of the box is set to be $5$. We use the code for simulating the springs model provided by NRI\footnote{{https://github.com/ethanfetaya/NRI}}. The initial locations are sampled as i.i.d. Gaussians $\mathcal{N}(0,0.5)$, and the initial velocities is a random vector of norm $0.5$. Newton's equation of motion is integrated with a step size of $0.001$ and then subsamples each $100$ step to get the training and testing trajectories. Each sampled trajectory contains $49$ snapshots of the system states.\\[1mm]

\noindent \textbf{Kuramoto model}. 
Kuramoto model~\cite{kuramoto1975self} describes phase-coupled oscillators placed on a graph. The evoluation of the phase $\phi_i$ of vertex $i$ is described by the following ODE:
\begin{equation}
    \frac{d\phi_i}{dt} = \omega_i + k \sum_{j\in N_i} \sin (\phi_j - \phi_i),
\end{equation}
where $\omega_i$ are the intrinsic frequencies and $k$ is the coupling strength. We use the implementation and default parameters provided by~\citeauthor{lowe2022amortized}\footnote{https://github.com/loeweX/AmortizedCausalDiscovery}. The observables contains $d\phi_i/dt$, $\sin \phi_i$, $\phi_i$ and $\omega_i$. The sampling rate is $\delta t = 0.01$.\\[1mm]

\noindent{\textbf{Friedkin--Johnsen Dynamics}}. 
The Friedkin--Johnsen (FJ) dynamics~\cite{friedkin1990social} is a classical model for describing opinion formation~\cite{abebe2018opinion,okawa2022predicting}, polarization and filter bubble~\cite{chitra2020analyzing} in social networks. In FJ model, each vertex at time $t$ holds an ``expressed'' opinion $x^t_i\in[-1,1]$ and an internal opinion $s_i\in [-1,1]$. While $s_i$ does not change over time, $x_i$ evolves according to the rule
\begin{equation}
    x^{t+1}_i = \frac{s_i + \sum_{j\in N_i} x^{t}_j}{1 + \vert N_i\vert},
\end{equation}
where $\vert N_i\vert$ is the degree of vertex $i$. The model reaches an equilibrium state in the long-time limit where all vertices hold a constant opinion.
We study relational inference for the model in its transient state.
We sample the initial expressed and internal opinions from a uniform distribution in $[-1,1]$ and update vertex states for some steps to generate the dataset. We take both $s_i$ and $x_i^t$ as observed quantities.\\[1mm]

\noindent{\textbf{Coupled Map Networks}}. Coupled Map Networks~\cite{garcia2002coupled} is a discrete-time system that can generate chaotic dynamics,
$$
x_i^{t+1} = (1-\epsilon) f(x_i^{t}) + \frac{\epsilon}{\vert N_i \vert } \sum_{j\in N_i} f(x_j^t),\ f(x_i^t) = \eta x_i^t (1-x_i^t). 
$$
We set $\epsilon = 0.2$ and $\eta = 3.5$. We use the implementation provided by~\citeauthor{zhang2022universal}\footnote{https://github.com/kby24/AIDD}.\\[1mm]

\noindent{\textbf{Netsim}}
Netsim~\cite{smith2011network} simulates blood-oxygen-level-dependent imaging data across
different regions within the human brain. The interaction graph describes the directed connectivity between brain regions. The dataset volume is $50\times 200$, and the graph has $n=15$ nodes. As in \citeauthor{lowe2022amortized}, we do not split the data into training/validation sets but use all the data in each phase. 

\begin{figure}[t]
\centering
\includegraphics[width=0.8\linewidth]{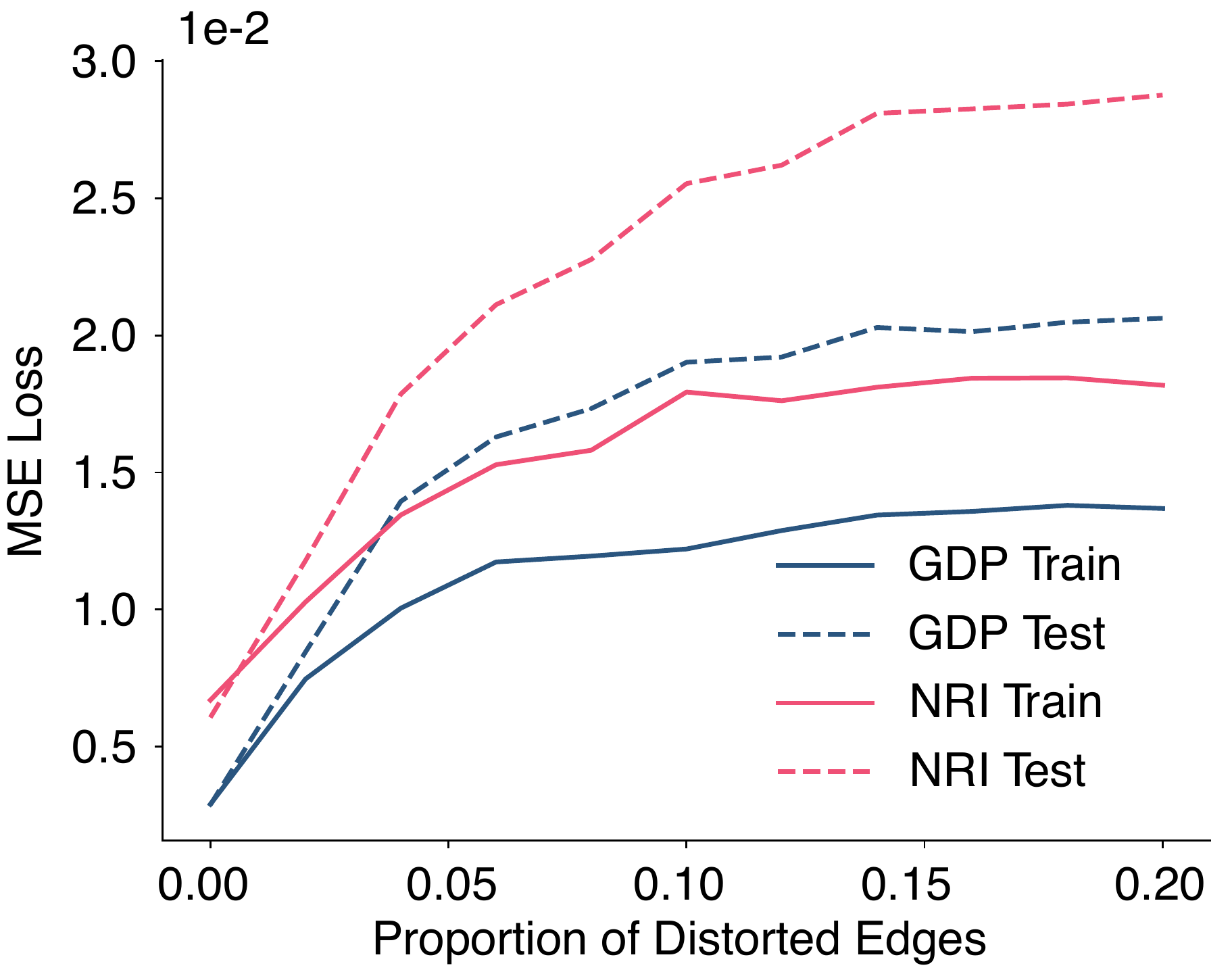}
    \caption{The training and test loss of neural surrogate with fixed graph. The graph is an ER graph with $n=20$ and $p=0.5$, and the sampling interval is set to be $\delta t=1$.
    The volume of training data is $50\times 10$. We randomly flip a fraction of edges, and train the model on the fixed distorted graph.}
\label{fig:distortion}
\end{figure}

\begin{figure}[t]
\centering
\includegraphics[width=0.8\linewidth]{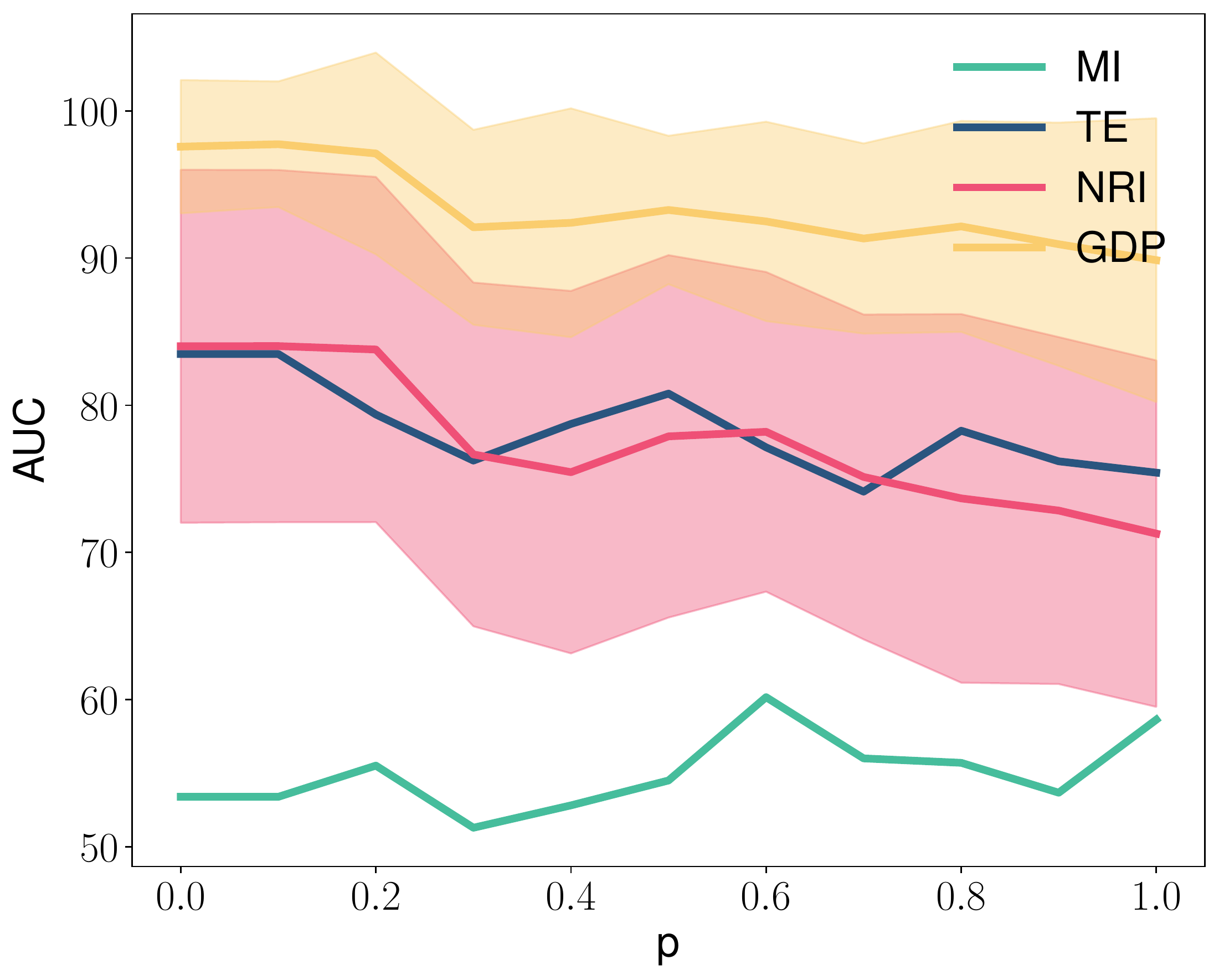}
    \caption{Influence of graph properties on inference accuracy for the diffusion model. The experiment is performed on Watts-Storgatz graphs with $n=30$, $k=2$, and we vary the rewiring probability $p$. The shaded region denotes the standard deviation.}
\label{fig:feature}
\end{figure}

\subsection{Baseline Methods and Implementation Details}
\label{sec:baseline}

\noindent \textbf{NRI}. NRI was originally designed in the amortized setting when each trajectory has a distinct interaction graph. It was extended to the non-amortized case for relational~\cite{zhang2022universal} and causal~\cite{lowe2022amortized} inference. In the non-amortized setting, the graph encoder is not necessary. As we do not require make causal claims, we do not use the test-time adaption technique in~\citeauthor{lowe2022amortized}, but infer the interaction graph during training time. The predicted interaction graph is picked through the MSE loss in a validation set. We re-implement NRI based on ~\citeauthor{lowe2022amortized} for the non-armotized case with binary edge types, and use their default hyperparameters. More details can be found in our implementation available online.\\[1mm]

\noindent \textbf{Mutual Information (MI)}. Mutual information of two random variables is defined as 
\begin{equation}
    MI_{i,j} = \sum_{x_i,x_j} P (x_i,x_j) \log \frac{P (x_i,x_j)}{P(x_i)P(x_j)},
\end{equation}
where $P (x_i,x_j)$ is the joint distributions of $x_i$ and $x_j$ sampled from the system trajectories, and $P(x_i)$ ($P(x_j)$) are their marginals. The random variables are defined on each vertex for the relational inference problem. We compute the mutual information for all pairs of nodes, which is further taken as the score for the presence of an edge. The node state distributions are approximated from the sampled trajectories.\\[1mm]

\noindent \textbf{Transfer Entropy (TE)}
The transfer entropy is defined as 
$$
TE_{i,j} = H(x_j^t\vert x_j^{t-1}) - H(x_j^t\vert x_j^{t-1}, x_i^{t-1}),
$$
where $H(\cdot \vert \cdot)$ is the conditional entropy. In order to estimate the conditional entropy numerically, the observed trajectories are discretized by data binning. The number of bins is picked from $\{2,200\}$, and we report the higher accuracy.

Unlike NRI and GDP, MI and TE do not require a validation set. For a fair comparison, we compute MI and TE on the training and validation set combined. When there are multiple variables, we compute MI or TE for each variable and take the average as the prediction. 
We use the implementations from netrd package\footnote{https://netrd.readthedocs.io/en/latest/} for MI and TE.

\begin{table*}[t]
\centering
\caption{Relational inference accuracy measured by AUC on directed graphs for different methods.}
\begin{center}
\begin{tabular}{@{}lccccccc@{}}
\toprule
        Model & Graph & $\delta t$ & Volume & MI & TE & NRI & GDP \\ 
 \midrule
        \textbf{Diffusion} & ER(D)-50 & $1$ & $ 20\times 10$ & 53.89 & 52.73 & 68.67±12.75 & \textbf{90.23±2.98} \\ 
        \textbf{Kuramoto} & ER(D)-50 & $1$ & $20 \times 30$ & 51.32 & 52.72 & 65.92±9.04 & \textbf{72.60±8.49} \\ 
\bottomrule
\end{tabular}
\end{center}
\label{tab:directed}
\end{table*}

\begin{table*}[t]
\centering
\caption{Relational inference accuracy measured by AUC on real-world data.}
\begin{center}
\begin{tabular}{@{}lccccccc@{}}
\toprule
        Model & Graph & $\delta t$ & Volume & MI & TE & NRI & GDP \\ 
 \midrule
        \textbf{Traffic} & METR-LA-207 & $1$ & $ 23974\times 12$ & 58.21 & 73.31 & 70.54 & \textbf{76.79} \\ 
        \textbf{MM} & Gene-100 & $1$ & $250 \times 10$ & 75.23 & 53.82 & 63.13 & \textbf{78.57} \\ 
\bottomrule
\end{tabular}
\end{center}
\label{tab:real}
\end{table*}

\subsection{Hyperparameters for GDP}
\label{sec:hyper}
All the results of GDP reported in the main text  (except the Netsim dataset) are obtained under identical neural network architecture and hyperparameters. We use Adam optimizer to train the model. The learning rate is set to be $0.1$ for the graph generator and $0.0005$ for the dynamics surrogate. The inverse temperature parameter $\beta$ of the graph generator in Equation~2 of the main text is set to be $0.5$. For all other datasets except Netsim, the polynomial filter is truncated at $K=4$, and the number of GNN layers is set to $1$.

The Netsim dataset is stochastic, and we introduce some addition techniques of temporal smoothing. In particular, we set $k=6$ with $4$ GNN layers. In order to make the outputs in the first three hidden layers stable, we perform linear interpolation on consecutive data points and fit the hidden layers outputs to the interpolated data. In addition, we replace the data at each time step with the average on its neighbouring time steps. More details can be found in our implementation available online. We leave a more detailed analysis of stochastic dynamics for future work.

\subsection{Graph / Complement Graph Ambiguity} \label{sec:ambiguity}
The neural surrogate essentially classify the edges into two types, but cannot determine which type correspond to the positive edges and which to the negative. In fact Equation~3 in the main text is permutation invariant to the edge-types $a\in\{0,1\}$. Therefore, there is the graph/complement graph ambiguity problem. Moreover, the effective interaction graph itself can be negatively correlated to the adjacency matrix. For example, for the linear ODE $\dot{\vx} = \beta \tilde{\mA} \vx$, a negative $\beta$ makes the effective interaction negatively correlated with the interaction graph, as shown in Figure~2 in the main text. In practical applications,  we can use sparsity prior, or compare with the statistical methods to decide which one as the prediction. In Table~1 in the main text, we simply report $\max\{\text{AUC}, \text{1-AUC}\}$.

\section{More Results}

\subsection{Insufficiency of stacking multiple message-passing layers}
We consider the approach by naively stacking two message-passing layers for the undersampling case. The results are shown in Table~\ref{tab:stack} for Spring and FJ dynamics. By stacking two message-passing layers, the AUC is close to 50\%, indicating that the approach degrades completely.

\subsection{Dependency on Polynomial Orders}\label{sec:kdepend}
We take the Michaelis-Menten Kinetics as an example to test the dependency on the polynomial order $K$. We consider an ER graph with $n=20$ nodes and generate the 
trajectories at sampling interval $\delta t =4$. The volume of training/validation/test data are $50\times 10$/$10\times 10$/$10\times 10$, respectively. The training and test errors and graph inference AUC versus $K$ are shown in Figure~\ref{fig:kdepend}. The model performance is optimized at an intermediate value of $K$, which shows that including higher-order terms can help in recovering the interactions.

\subsection{Ablation Study of Using Only Polynomial filters}\label{sec:ablation}
Our proposed model reduces to NRI if the polynomial filter part is removed. Therefore, the ablation study mainly focuses on the case when
we only use a polynomial filter. As we have discussed in the main text, there are, in general, multiple graph matrices that can result in the same polynomial filters, so it is hard to control which one will converge. We use only the  $g_{\vtheta}$ model to verify the phenomenon and conduct experiments with different random seeds in Pytorch. We use the diffusion model as an example and record the predictions generated by $\mA$, $\tilde{\mA}$ and $g_{\theta}(\tilde{\mA})$ in Table~\ref{tab:seed}. Any of these three matrices has predicted the interaction graph best under one random seed, suggesting that the convergence is hard to control if we only use the polynomial filter. As we only list three related graph matrices, other better matrices might not be included. Therefore, an $\mA$ model is necessary to stabilize the solutions.

\subsection{Local Attractiveness of Ground Truth Graph}\label{sec:attractor}
The experimental results show that the ``good'' graph for dynamics prediction is also correct. In other words, the true graph is an attractor (at least locally) when we make the graph trainable. We test the local attractiveness of the ground truth graph via the following experiment. In particular, we train the neural surrogate by keeping the graph fixed on each set of graphs. The set contains a ground truth graph as well as many distorted graphs. We randomly select a fraction of the edges for distortion and flip its edge type. We consider the Diffusion model and generate the training trajectories by simulating with the ground truth graph and train the neural surrogate on each graph for $10$ times independently. The average train and test MSE error versus the proportion of distorted edges is shown in Figure~\ref{fig:distortion}. From Figure~\ref{fig:distortion}, for both NRI and GDP, the train and test error in general increase with the proportion of distorted edges, which reflects that with the ground truth graph, we can better predict the dynamics.

\subsection{Results on directed graphs}\label{sec:directed}
In the main text, only the Netsim dataset is directed. We perform additional experiments to verify the performance of GDP in this case. In particular we take Diffusion and Kuramoto dynamics on directed ER graphs as examples. The results are shown in Table~\ref{tab:directed}. GDP outperforms other baselines on the considered examples.

\subsection{Results on real-world data}\label{sec:real}

We carry out additional experiments on real-world data. We consider two new datasets: (\romannumeral1) A dataset of traffic information collected from loop detectors on the Los Angeles County~\cite{li2018diffusion}. We selected 207 sensors and 4 months of data from Mar 1st 2012 to Jun 30th 2012 for the experiment. (\romannumeral2) A gene regulatory network of S. cerevisiae yeast and Michaelis-Menten model on the regulation data~\cite{schaffter2011genenetweaver}. 

Both new sets of results are shown in Table~\ref{tab:real}. From the results, GDP also performs well on these new datasets, outperforming all baselines. More experiments on real data requires considerable domain-specific knowledge (and maybe according model tweaks), and we leave it to future works.

\subsection{Impacts of Graph Structure on Inference Accuracy}

Previous works on graph dynamical systems show that the dynamical behaviour relies strongly on the graph structure. For example, the Kuramoto model has an incoherent-coherent phase transition, where the transition threshold depends on the graph structure intricately. In the coherent phase, we expect the inference to be more challenging as all nodes are synchronized and have identical states. Therefore, the influence of graph properties should be dynamics-dependent. One way to test this influence experimentally is by using graphs with tunable graph properties. We conduct experiments with the Diffusion model on the Watts-Strogatz graph. The graph starts with a k-nearest neighbour ring, and each edge is randomly rewired with probability $p$. When $p$ grows, the graph becomes more random. The inference accuracy versus $p$ is shown in Figure~\ref{fig:feature}. We can observe that the inference accuracy decreases as the graph becomes increasingly random. 

\end{document}